\newcounter{algorithm}
\newcommand{\newalgthin}[2]{%
\begin{mdframed}[linewidth=0.3pt]
\centering\small
\underline{\refstepcounter{algorithm}\textbf{Algorithm \thealgorithm}: #1}
#2
\end{mdframed}}
\newcommand{\newalg}[4]{\newalgthin{#1}{%
\begin{flushleft}
\textbf{Input:} #2\\
\textbf{Output:} #3
\end{flushleft}
#4}}
\newif\iflongversion
\newcommand{\bl}[1]{\textcolor{blue}{\texttt{#1}}}
\newcommand{\covide}{\textsc{covid-19}}
\newcommand{\covid}{\textsc{covid-19~}}
\begin{document}

\title{\textsc{optimal \covid pool testing with a priori information}}

\longversiontrue

\author{
	Marc~Beunardeau\thanks{Work done when the author was with ÉNS and Ingenico Laboratories.}  \and 
	Éric~Brier\inst{2} \and 
    No\'emie~Cartier\inst{1} \and 	
	Aisling~Connolly\inst{1,2} \and 
    Nathana\"el~Courant\inst{1} \and 
	Rémi~Géraud-Stewart\inst{1,2} \and 
	David~Naccache\inst{1,3}\and	
	Ofer~Yifrach-Stav\inst{1} 
}
\institute{
ÉNS (DI), Information Security Group,
CNRS, PSL Research University, 75005, Paris, France.\\
	\email{\url{given\_name.family\_name@ens.fr}}\\
	\and
	Ingenico Laboratories, 75015, Paris, France.\\
	\email{\url{given\_name.family\_name@ingenico.com}}\\
	\and 
	School of Cyber Engineering, Xidian University, Xi'an, 710071, PR China\\
		\email{\url{david@xidian.edu.cn}}
}

\maketitle

\begin{abstract} 
As humanity struggles to contain the global \covid infection, prophylactic actions are grandly slowed down by the shortage of testing kits.

Governments have taken several measures to work around this shortage: the FDA has become more liberal on the approval of \covid tests in the US. In the UK emergency measures allowed to increase the daily number of locally produced test kits to 100,000. China has recently launched a massive test manufacturing program.

However, all those efforts are very insufficient and many poor countries are still under threat.

A popular method for reducing the number of tests consists in \emph{pooling samples}, i.e. mixing patient samples and testing the mixed samples once. If all the samples are negative, pooling succeeds at a unitary cost. However, if a single sample is positive, failure does not indicate \emph{which} patient is infected.

This paper describes how to optimally detect infected patients in pools, i.e. using a minimal number of tests to precisely identify them, given the a priori probabilities that each of the patients is healthy. Those probabilities can be estimated using questionnaires, supervised machine learning or clinical examinations.

The resulting algorithms, which can be interpreted as informed divide-and-conquer strategies, are non-intuitive and quite surprising. They are patent-free. Co-authors are listed in alphabetical order.
\end{abstract}

\section{Introduction and motivation}

\subsection{Testing for \covid infection}

The current \covid (Coronavirus Disease 2019) pandemic is rapidly spreading and significantly impacts healthcare systems. Stay-at-home and social distancing orders enforced in many countries are supporting the control of the disease's spread, while causing turmoil in the economic balance and in social structures\cite{baker2020unprecedented}. 
Rapid detection of cases and contacts is an essential component in controlling the spread of the pandemic. In the US, the current estimation is that at least 500,000 Covid-19 tests will need to be performed daily in order to successfully reopen the economy \cite{thecrimson:2020}
\smallskip

Unfortunately, as humanity attempts to limit the global \covid infection, prophylactic actions are grandly slowed-down by the severe shortages of \covid testing kits \cite{ellis:2020}.\smallskip

There are currently two types of tests for \covide: 
\begin{itemize}
    \item 
\emph{Molecular diagnostic tests} that detect the presence of  \textsc{sars-cov-2} nucleic acids in human samples. A positive result of these tests indicates the presence of the virus in the body. 
\item 
\emph{Serological diagnostic tests} that identify antibodies (e.g., IgM, IgG) to \textsc{sars-cov-2} in clinical specimens \cite{wang2020combination}. Serological tests, also known as \emph{antibody tests}, can be helpful in identifying not only those who are ill, but also those who have been infected, as antibodies are still present in their blood. This identification may be important for several reasons. First, this test can differentiate those who are immune to the virus and those who are still at risk. Secondly, identifying populations who have antibodies can facilitate research on the use of convalescent plasma in the development of a cure for \covid \cite{FDA:2020}.
\end{itemize}
As mentioned, both tests are in very short supply.
Governments have taken several measures to work around this shortage: the FDA\footnote{United States Food and Drug Administration} has become more liberal on the approval of \covid tests via the EUA (Emergency Use Authorization) \cite{FDA:2020}; in the US and the UK there are attempts to boost the number of locally produced test kits to reach a throughput of 100,000 kits per day. Those efforts cannot, however, be followed by many countries and there remains entire swaths of Africa, Asia and Latin America that are under concrete threat. 

\subsection{Pool testing}

To optimize the use of available tests, reduce costs and save time, \emph{pool testing} (or \emph{group testing}) can be considered. The concept is credited to Dorfman \cite{Dorfman} who wished to test US servicemen for syphilis. In pool testing, multiple samples are mixed, and the resulting \enquote{pool} is tested using the same amount of material or equipment that would have been required to test one individual sample. \cite{111,222} are important refinements of Durfman's work.

However, when at least one sample in the pool is positive, then the pool test fails. This means that (at least) one sample in the pool is positive, but gives no information about which one. The most naive approach is then to re-test individually each sample, which can be costly and time consuming.

The area has seen numerous developments. \cite{DBLP:journals/corr/abs-1902-06002} provides a recent survey of the topic and \cite{book} is the reference book on the topic.\smallskip

It is important to distinguish between two types of pool tests: \textsl{Adaptive tests} where the tested samples depend on previously tested ones and \textsl{Nonadaptive tests}, where all the tests are planned in advance. 

Pool tests are also classified as either probabilistic or combinatorial. In essence, probabilistic models assume a probability distribution and seek to optimize the average number of tests required to test all the patients. By opposition, combinatorial algorithms seek to minimize the worst-case number of tests when the probability distribution governing the infection is unknown.

This paper deals with adaptive probabilistic tests.

\subsection{Related Research}

Pool testing has already been used to screen large portions of the population in scarcely-infected areas (or as a best-effort measure, when test availability was low). 
Pool testing has been successfully used to identify viral diseases, such as \textsc{hiv} \cite{nguyen2019methodology}, \textsc{zika} \cite{bierlaire2017zika}, and \textsc{Influenza} \cite{Van891}. In addition, Pool testing has been suggested as a screening method for routine \textsc{hcv}, \textsc{hbv}, and \textsc{hiv} -1 PCR donors for a blood-bank \cite{roth1999feasibility}.
In light of the recent pandemic and the urging need to test vast number of subjects, the idea of Pool testing is becoming more and more appealing.  \smallskip

It is currently the official testing procedure in Israel \cite{BenAmi2020}, Germany, South Korea\cite{Korea:2020}, and some US\footnote{e.g Nebraska.} \cite{Nebraska:2020} and Indian\footnote{e.g. Uttar Pradesh, West Bengal, Punjab, Chhattisgarh, Maharashtra.}  states\footnote{Indian Council of Medical Research, \emph{Advisory on feasibility of using pooled samples for molecular testing of \covid}, April 13, 2020.} \cite{India:2020}. Field research focusing on reducing the number of tests \cite{Farfan2020,assad2020,gollier2020group} did not analyse prior information strategies but instead provided simulation (or small sample) results showing the benefits of pool testing. In most cases, the existing literature only uses pooling as a way to screen the infection in an emerging context, not as a precise approach to identify which individuals are infected and which are not. \smallskip

We also note projects meant to reduce the amount of work required for pool testing: e.g. the Origami Assays \cite{origami} Project, a collection of open source pool testing designs for standard 96 well plates. The Origami XL3 design tests 1120 patients in 94 assay wells.


Yelin et al. \cite{yelin2020evaluation} demonstrated that pool testing can be used effectively to identify one positive \textsc{sars-cov-2} result within 32 samples, and possibly within 64 samples if the cycles are amplified, with an estimated false-negative rate of 10\%. \cite{Taufer2020.04.06.028431} uses a strategy consisting in running \enquote{cross batches}, where the same individuals are tested several times but in different pools, which eventually leads to positive sample identification. The resulting approach ends up using more tests overall (since it tests every individual more than once) than the strategy proposed in this work and does not exploit prior information. 
Similarly, Sinnott-Armstrong et al. \cite{sinnott2020evaluation} suggested to identify low-risk individuals (i.e. asymptomatic and mild cases) and to test them as a pool using a matrix-based method, so as to reduce the number of tests required by up to eight-fold, depending on the prevalence. 

It is hence plausible to assume that a successful emergency application of the refined pool testing procedures described in this paper would improve the \covid testing capacity significantly. 

\smallskip



\subsection{Our Contribution}

This paper departs from the above approaches by assuming the availability of extra information: the a priori probability that each given test is negative. In practice, we may either assume that such probabilities are given, estimated from patient trust metrics, or are learned from past \covid tests. We assume in this work that these probabilities are known.

We show that it is possible to find positive samples in an optimal way, i.e., by performing on average the minimum number of tests. This turns out to be faster than blind divide-and-conquer testing in the vast majority of settings.

A concrete consequence of this research is the design of testing procedures that are faster and more cost-effective.

\section{Intuition}

Before introducing models and general formulae, let us provide the intuition behind our algorithms.

Let us begin by considering the very small case of two samples. These can be tested individually or together, in a pool. Individual \covid testing claims a minimum two units of work---check one sample, then check the other. Pool-checking them requires a minimum of one \covid test. If it is highly probable that both samples are negative, then pool testing is interesting: If both samples are indeed negative, we can make a conclusion after one test and halve the \covid test's cost. However, if that fails, we are nearly back to square one: One of these samples (at least) is positive, and we don't know which one.

In this paper, we identify \emph{when} to check samples individually, and when to pool-check them instead---including all possible generalizations when there are more than 2 samples. We assume that the probability of a sample being positive is known to us in advance. The result is a testing \enquote{metaprocedure} that offers \emph{the best alternative to sequential and individual testing.}

To demonstrate: the testing procedure that always works is to test every sample individually, one after the other: This gives the \enquote{naive procedure}, which always performs $2$ \covid tests, as illustrated in Figure \ref{fig:naive2}. In this representation, the numbers in parentheses indicate which samples are being tested at any given point. The leaves indicate which samples are negative (denoted \textcolor{blue}{\texttt{1}}) or positive (denoted \textcolor{blue}{\texttt{0}}), for instance the leaf \textcolor{blue}{$\texttt{01}$} indicates that only the second sample is healthy. Note that the order in which each element is tested does not matter: There are thus $2$ equivalent naive procedures, namely the one represented in Figure \ref{fig:naive2}, and the procedure obtained by switching the testing order of $(1)$ and $(2)$.
\begin{figure}[!ht]
\centering 
\begin{tikzpicture}[level/.style={sibling distance=2cm, level distance=0.5cm}]
\Tree[
.\node{(1)} ;
	\edge ; [.\node{(2)} ; 
		\edge ; [.\node[blue]{\texttt{11}} ; ] 
		\edge ; [.\node[blue]{\texttt{10}} ; ] 
	]
	\edge ; [.\node{(2)}; 
		\edge ; [.\node[blue]{\texttt{01}} ; ]
		\edge ; [.\node[blue]{\texttt{00}} ; ]
	]
]
\end{tikzpicture}
\caption{The ``naive procedure'' for $n = 2$ consists in testing each entity separately and sequentially.}\label{fig:naive2}
\end{figure}
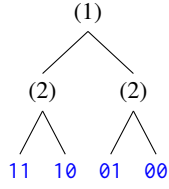

Alternatively, we can leverage the possibility to test both samples together as the set 
$\{1, 2\}$. In this case, pooling the pair $\{1,2\}$ must be the first step: Indeed, testing $\{1, 2\}$ after any other test would be redundant, and the definition of testing procedures prevents this from
happening. If the test on $\{1, 2\}$ is negative, both samples are negative and the procedure immediately yields the outcome \textcolor{blue}{$\texttt{11}$}. Otherwise, we must identify which of the samples 1 or 2 (or both) is responsible for the test's positiveness. There are thus two possible procedures, illustrated in Figure \ref{facto2}.

\begin{figure}[!ht]
\centering
\scalebox{0.8}{
\begin{tikzpicture}[level/.style={sibling distance=2cm, level distance=0.5cm}]
\Tree[
.\node{(1,2)} ;
	\edge ; [.\node[blue]{\texttt{11}} ; ] 
	\edge ; [.\node{(1)} ;
       	\edge ; [.\node[blue]{\texttt{10}} ; ]
        \edge ; [.\node{(2)} ;
           	\edge ; [.\node[blue]{\texttt{01}} ; ]
            \edge ; [.\node[blue]{\texttt{00}} ; ]
        ]
	]
]
\end{tikzpicture}}%
\scalebox{0.8}{
\begin{tikzpicture}[level/.style={sibling distance=2cm, level distance=0.5cm}]
\Tree[
.\node{(1,2)} ;
	\edge ; [.\node[blue]{\texttt{11}} ; ] 
	\edge ; [.\node{(2)} ;
       	\edge ; [.\node[blue]{\texttt{01}} ; ]
        \edge ; [.\node{(1)} ;
           	\edge ; [.\node[blue]{\texttt{10}} ; ]
            \edge ; [.\node[blue]{\texttt{00}} ; ]
        ]
	]
]
\end{tikzpicture}}
\caption{Two pooling testing procedures having $(1, 2)$ as root.}\label{facto2}
\end{figure}
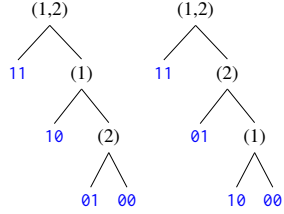

Intuitively, the possibility that this procedure terminates early indicates that, in some situations at least, only one test is performed, and is thus less costly than the naive procedure. However, in some situations up to three tests can be performed, in which case it is more costly than the naive procedure.

Concretely, we can compute how many \covid tests are performed on average by each approach, depending on the
probability $x_1$ that the first sample is positive, and $x_2$ that the second is positive. To each procedure, $\textsf{naive}, \textsf{pool-\textit{left}}, \textsf{pool-\textit{right}}$, we associate the following polynomials representing the expected stopping time:

\begin{itemize}
\item $L_\textsf{{naive}} = 2$
\item $L_\textsf{{pool-\textit{left}}} = (1-x_1)(1-x_2) + 2(1-x_1)x_2 + 3x_1(1-x_2) + 3 x_1x_2$
\item $L_\textsf{{pool-\textit{right}}} = (1-x_1)(1-x_2) + 3(1-x_1)x_2 + 2x_1(1-x_2) + 3 x_1x_2$
\end{itemize}
It is possible to see analytically which of these polynomials evaluates to the smallest value as a function of $(x_1, x_2)$. Looking at Figure \ref{fig:zone2}, we use these expectations to define zones in $[0, 1]^2$ where each algorithm is optimal (i.e. the fastest on average). More precisely, the frontier between zones $C$ and $B$ has equation $x_1 = x_2$, the frontier between $A$ and $B$ has equation $x_2 = (x_1 - 1)/(x_1 - 2)$, the frontier between $A$ and $C$ has equation $x_2 = (2x_1 - 1)/(x_1 - 1)$, and the three zones meet at $\overline x_1 = \overline x_2 = (3 - \sqrt5)/2$, a well-known cutoff value observed as early as 1960 \cite{ungar}.
\begin{figure}[!ht]
\centering 
\scalebox{0.8}{
\begin{tikzpicture}[scale=4]
  \draw (0.5cm,3pt) node[above] {$B$};
  \draw (4pt,0.4cm) node[above] {$C$};
  \draw (0.6cm,0.6cm) node[above] {$A$};
  \draw[->] (-0.05,0) -- (1.1,0) node[right] {$x_1$};
  \draw[->] (0,-0.05) -- (0,1.1) node[above] {$x_2$};
  \draw [scale=1,smooth, gray,dashed] (1,0) --(1,1);
  \draw [scale=1,smooth, gray,dashed] (0,1) --(1,1);

  \foreach \x/\xtext in {0/0, 1/1}
    \draw[shift={(\x,0)}] (0pt,1pt) -- (0pt,-1pt) node[below] {$\xtext$};
  \foreach \y/\ytext in {0/0, 1/1}
    \draw[shift={(0,\y)}] (1pt,0pt) -- (-1pt,0pt) node[left] {$\ytext$};
    
  \draw[scale=1,domain=0:0.382,smooth,variable=\y,red]  plot ({\y},{\y});
  \draw[scale=1,domain=0:0.382,smooth,variable=\x,red]  plot ({\x}, {(2*\x-1)/(\x-1)});
  \draw[scale=1,domain=0.382:1,smooth,variable=\x,red]  plot ({\x}, {(\x-1)/(\x-2)});
\end{tikzpicture}}
\caption{Optimality zones for $n=2$. $A$ : naive procedure; $B$ : pooling procedure (right); $C$ : pooling procedure (left).}\label{fig:zone2}
\end{figure}
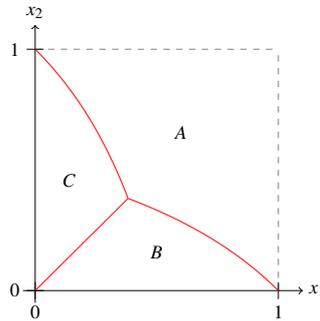

Having identified the zones, we can write an algorithm which, given $x_1$ and $x_2$, identifies in which zone of
Figure~\ref{fig:zone2} $(x_1,x_2)$ lies, and then apply the corresponding optimal testing sequence. In the specific
case illustrated above, three algorithms out of three were needed to define the zones; however, for any larger scenario,
we will see that only a very small portion of the potential algorithms will be carefully selected.

Our objective is to determine the zones, and the corresponding testing algorithms, for arbitrary $n$, so as to identify which samples in a set are negative and which are not, while minimizing the expected number of testing operations.

\section{Preliminaries}

This section will formalize the notion of a testing procedure, and the cost thereof, so that the problem at
hand can be mathematically described. We aim at the greatest generality, which leads us to introduce `and-tests', a special
case of which are samples that can be pool tested. 

\subsection{Testing procedures}

We consider a collection of $n$ samples. Let $[n]$ denote $\{1, \dotsc, n\}$, and $\Omega = \mathcal{P}({[n]})\backslash\{\emptyset\}$, where $\mathcal{P}$ is the power set (ie. $\mathcal{P}(X)$ is the set of subsets of $X$). 

\begin{definition}[Test]
A \emph{test} is a function $\phi: \Omega \to \{0, 1\}$, that associates a bit to each subset of $\Omega$. 
\end{definition}
We focus in this work on the following:

\begin{definition}[And-Tests]
An \emph{and-test} $\phi: \Omega \to \{0, 1\}$ is a test satisfying the following property:
\begin{equation*}
\forall T \in \Omega, \quad \phi(T) = \bigwedge_{t \in T} \phi(\{t\}).
\end{equation*}
\end{definition}
In other terms, the result of an and-test on a set is exactly the logical and of the test results on individual members of that set.

\begin{remark}Note that \enquote{or-tests}, where $\wedge$ is replaced by $\vee$ in the definition, are exactly dual to our setting. \enquote{xor-tests} can be defined as well but are not investigated here. Although theoretically interesting by their own right, we do not address the situation where both and-tests and or-tests are available, since we know of no concrete application where this is the case.
\end{remark}
Elements of $\Omega$ can be interpreted as $n$-bit strings, with the natural interpretation where the $i$-th bit indicates whether $i$ belongs to the subset. We call \emph{selection} an element of $\Omega$.

\begin{definition}[Outcome]
The \emph{outcome} $F_\phi(T)$ of a test $\phi$ on $T \in \Omega$ is the string of individual test results:
\begin{equation*}
F_\phi(T) = \{ \phi(x), x \in T \} \in \{0,1\}^n.
\end{equation*}
When $T=[n]$, $F_\phi$ will concisely denote $F_\phi([n])$.
\end{definition}
Our purpose is to determine the outcome of a given test $\phi$, by minimizing in the expected number of queries to $\phi$. Note that this minimal expectation is trivially upper bounded by $n$.

\begin{definition}[Splitting]
Let $T \in \Omega$ be a selection and $\phi$ be a test. Let $\mathcal S$ be a subset of $\Omega$.
The \emph{positive part of $\mathcal S$ with respect to $T$}, denoted $\mathcal S_T^\top$, is defined as the set
\begin{equation*}
\mathcal{S}_T^{\top} = \left\{ S | S \in \mathcal{S}, S \wedge T = T \right\}.
\end{equation*}
where the operation $\wedge$ is performed element-wise. This splits $\mathcal S$ into two.
Similarly the complement $\mathcal{S}_T^{\bot} = \mathcal S - \mathcal{S}_T^{\top}$ is called the \emph{negative part of $\mathcal{S}$ with respect to $T$}.
\end{definition}
We are interested in algorithms that find $F_\phi$. More precisely, we focus our attention on the following:
\begin{definition}[Testing procedure]
\label{def5}
A \emph{testing procedure} is a binary tree $\mathcal T$ with labeled nodes and leaves, such that:
\begin{enumerate}
\item The leaves of $\mathcal T$ are in one-to-one correspondence with $\Omega$ in string representation;
\item Each node of $\mathcal T$ which is not a leaf has exactly two children, $(S_\bot, S_\top)$, and is labeled $(S, T)$ where $S \subseteq \Omega$ and $T \in \Omega$, such that
\begin{enumerate}
\item $S_\bot \cap S_\top = \emptyset$
\item $S_\bot \sqcup S_\top$ = S
\item $S_\bot = S_T^\bot$ and $S_\top = S_T^\top$.
\end{enumerate}
\end{enumerate}
\end{definition}
\begin{remark}
It follows from the definition \ref{def5} that a testing procedure is always a \emph{finite} binary tree, and that no useless calls to $\phi$ are performed. Indeed, doing so would result in an empty $S$ for one of the children nodes. Furthermore, the root node has $S = \Omega$.
\end{remark}

\subsection{Interpreting and representing pooling procedures}

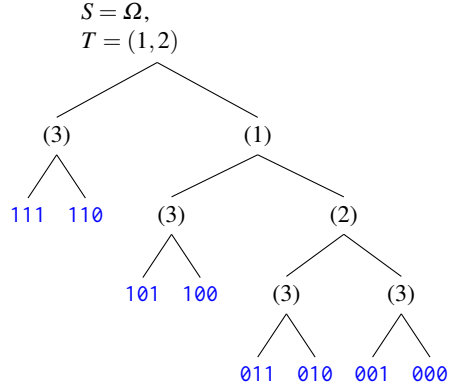
\begin{figure}[!ht]
\centering
\begin{tikzpicture}[
level/.style={sibling distance=2cm,
level distance=0.5cm},
event/.style={text width=2cm,anchor=south}
]
\Tree[
.\node[event]{$S = \Omega$, \\$T = (1,2)$};
	\edge ; [.\node{(3)} ; 
		\edge ; [.\node[blue]{\texttt{111}} ; ] 
		\edge ; [.\node[blue]{\texttt{110}} ; ] 
	]
	\edge ; [.\node{(1)}; 
		\edge ; [.\node{(3)} ;
        	\edge ; [.\node[blue]{\texttt{101}} ; ] 
			\edge ; [.\node[blue]{\texttt{100}} ; ]
        ]
		\edge ; [.\node{(2)} ;
        	\edge ; [.\node{(3)} ;
        		\edge ; [.\node[blue]{\texttt{011}} ; ] 
				\edge ; [.\node[blue]{\texttt{010}} ; ]
            ]
            \edge ; [.\node{(3)} ;
        		\edge ; [.\node[blue]{\texttt{001}} ; ] 
				\edge ; [.\node[blue]{\texttt{000}} ; ]
            ]
		]
	]
]
\end{tikzpicture}
\caption{Graphical representation of a testing procedure. The collection is $[3] = \{1, 2, 3\}$, $\Omega = \{\{1\}, \{2\}, \{3\}, \{1,2\}, \{1,3\}, \{2,3\}, \{1,2,3\}\}$, the initial set of selections is $S = \Omega$. Only the $T$ labels are written on nodes. Only the $S$ labels are written for leaves.}
\label{fig:procedure}
\end{figure}

Consider a testing procedure $\mathcal T$, defined as above. $\mathcal{T}$ describes the following algorithm. At each node $(S, T)$, perform the test $\phi$ on the selection $T$ of samples. If $\phi(T) = 0$, go to the left child; otherwise go to the right child. 
Note that at each node of a testing procedure, only one invocation of $\phi$ is performed.

The tree is finite and thus this algorithm reaches a leaf $S_\text{final}$ in a finite number of steps. By design, $S_\text{final} = F_\phi$.

\begin{remark}
From now on, we will fix $\phi$ and assume it implicitly. 
\end{remark}

\begin{remark} We represent a testing procedure graphically as follows: Nodes (in black) are labeled with $T$, whereas leaves (in blue) are labeled with $S$ written as a binary string. This is illustrated in Figure~\ref{fig:procedure} for $n = 3$.

This representation makes it easy to understand how the algorithm unfolds and what are the outcomes: Starting from the root,
each node tells us which entity is tested. If the test is positive, the right branch is taken, otherwise the left branch is taken.
Leaves indicate which samples tested positive and which samples tested negative from now on.\end{remark}

\begin{remark}
The successive steps of a testing procedure can be seen as imposing new logical constraints. These constraints ought to be satisfiable (otherwise one set $S$ is empty in the tree, which cannot happen). The formula at a leaf is maximal in the sense that any additional constraint would make the formula unsatisfiable. This alternative description in terms of satisfiability of Boolean clauses is in fact strictly equivalent to the one that we gave.

In that case, $T$ is understood as a conjunction $\bigwedge_{T[i] = 1} t_i$, $S$ is a proposition formed by a combination of terms $t_i$, connectors $\vee$ and $\wedge$, and possibly $\neg$. The root has $S = \top$. The left child of a node labeled $(T, S)$ is labeled $S_T^\bot = S \wedge (\neg T)$; while the right child is labeled $S_T^\top = S \wedge T$. At each node and leaf, $S$ must be satisfiable.
\end{remark}

\subsection{Probabilities on trees}

To determine how efficient any given testing procedure is, we need to introduce a probability measure, and 
a metric that counts how many calls to $\phi$ are performed.

We consider the discrete probability space $(\Omega,\Pr)$. The \emph{expected value} of a random variable $X$ is 
classically defined as:
\begin{equation*}
\operatorname{E} [X]  = \sum_{\omega \in \Omega} X(\omega) \Pr(\omega)
\end{equation*}
Let $\mathcal T$ a testing procedure, and let $S \in \Omega$ be one of its leaves. The \emph{length} $\ell_{\mathcal T}(S)$ of $\mathcal T$ over $S$ is the distance on the tree from the root of $\mathcal T$ to the leaf $S$. This corresponds to the number of tests required to find $S$ if $S$ is the outcome of $\phi$. The \emph{expected length} of a testing procedure $\mathcal T$ is defined naturally as:
\begin{equation*}
L_{\mathcal T} = \operatorname{E}\left[ \ell_{\mathcal T} \right] = \sum_{\omega \in \Omega} \ell_{\mathcal T}(\omega) \Pr(\omega)
\end{equation*}
It remains to specify the probabilities $\Pr(\omega)$, i.e. for any given binary string $\omega$, the probability that $\omega$ is the outcome. 

If the different tests are independent, we can answer this question directly with the following result:
\begin{lemma}
Assume that the events \enquote{$\phi(\{i\}) = 1$} and \enquote{$\phi(\{j\}) = 1$} are independent for $i \neq j$. Then, $ \forall \omega \in \Omega$, $\Pr(\omega)$ can be written as a product of monomials of degree 1 in $x_1, \dotsc, x_n$, where 
\begin{equation*}
x_i = \Pr(\phi(\{i\}) = 1) = \Pr(\text{$i$-th bit of }\omega = 1).
\end{equation*}
Thus $L_{\mathcal T}$ is a multivariate polynomial of degree $n$ with integer coefficients.
\end{lemma}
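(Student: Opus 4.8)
The plan is to reduce the whole statement to a single factorisation of $\Pr(\omega)$ coming from independence, and then to track degrees and coefficients through the finite sum that defines $L_{\mathcal{T}}$; everything else is bookkeeping in the ring $\mathbb{Z}[x_1, \dots, x_n]$.

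First I would pin down $\Pr(\omega)$ for a fixed outcome $\omega$, i.e. a leaf of $\mathcal{T}$. By definition the outcome is $F_\phi = (\phi(\{1\}), \dots, \phi(\{n\}))$, so the $i$-th bit of $\omega$ equals $1$ exactly when the event ``$\phi(\{i\}) = 1$'' occurs, an event of probability $x_i$, and equals $0$ with probability $1 - x_i$. The event ``$\omega$ is the outcome'' is the intersection over $i$ of these $n$ single-bit events; invoking independence of the events ``$\phi(\{i\}) = 1$'' to factor the joint probability gives
\[
\Pr(\omega) = \prod_{i\,:\,\omega_i = 1} x_i \; \prod_{i\,:\,\omega_i = 0} (1 - x_i).
\]
Each of the $n$ factors is a degree-$1$ polynomial in a single variable with integer coefficients (either $x_i$ or $1 - x_i$), so the product lies in $\mathbb{Z}[x_1, \dots, x_n]$ and has total degree at most $n$. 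This is exactly the first assertion.

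Next I would assemble $L_{\mathcal{T}}$. Since $\ell_{\mathcal{T}}(\omega)$ is the depth of the leaf $\omega$ in the \emph{finite} tree $\mathcal{T}$, it is a nonnegative integer, and the sum $L_{\mathcal{T}} = \sum_{\omega} \ell_{\mathcal{T}}(\omega)\Pr(\omega)$ has finitely many terms. It is therefore an integer-linear combination of finitely many elements of $\mathbb{Z}[x_1, \dots, x_n]$, each of degree at most $n$; as $\mathbb{Z}[x_1, \dots, x_n]$ is closed under addition and integer scaling, $L_{\mathcal{T}}$ is again a polynomial with integer coefficients of total degree at most $n$, which is the conclusion.

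There is no deep obstacle here; the two points that genuinely need care are the following. The factorisation step requires the $n$ events ``$\phi(\{i\}) = 1$'' to be \emph{mutually} (jointly) independent, which is slightly stronger than the pairwise statement; I would read the hypothesis as full independence of the per-sample results, natural when the samples come from distinct patients, and flag this explicitly. Secondly, ``degree $n$'' must be understood as ``degree \emph{at most} $n$'': the top-degree coefficient of $L_{\mathcal{T}}$ is $\sum_{\omega} (-1)^{\,n - |\omega|}\,\ell_{\mathcal{T}}(\omega)$, which may vanish --- indeed the naive procedure for $n = 2$ has $L_{\mathcal{T}} = 2$, a constant. So no separate argument for exactness of the degree is needed (one would in fact be false), and the bound $\deg L_{\mathcal{T}} \le n$ supplied by the factorisation above is all that can, and need, be claimed.
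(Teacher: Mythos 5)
Your proof is correct, and it supplies precisely the argument the paper leaves implicit: the lemma is stated there without any proof at all, followed only by the example $\Pr(\texttt{11101}) = x_1x_2x_3(1-x_4)x_5$, which is exactly your factorisation $\Pr(\omega) = \prod_{i:\,\omega_i = 1} x_i \,\prod_{i:\,\omega_i = 0} (1-x_i)$ specialised to one outcome, so you are filling the gap rather than diverging from the authors' route. Your two caveats are legitimate and in fact sharpen the paper's statement: factorising the joint law of all $n$ bits requires \emph{mutual} independence of the events ``$\phi(\{i\}) = 1$'', strictly stronger than the pairwise hypothesis as literally written (though clearly what the authors intend, as their subsequent remark simply assumes the per-sample results independent); and ``degree $n$'' must indeed be read as ``degree at most $n$'', since the naive procedure has $\ell_{\mathcal T} \equiv n$ and hence the constant polynomial $L_{\mathcal T} = n$, consistent with the vanishing of your top-coefficient expression $\sum_{\omega}(-1)^{\,n-|\omega|}\,\ell_{\mathcal T}(\omega)$. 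No gaps.
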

In fact, or-tests provide inherently independent tests. Therefore we will safely assume that the
independence assumption holds.
\begin{example}
Let $n = 5$ and $\omega = \texttt{11101}$, then $\Pr(\omega) = x_1x_2x_3(1-x_4)x_5$.
\end{example}
\begin{remark}
$L_{\mathcal T}$ is uniquely determined as a polynomial by the integer vector of length $2^n$ defined by
all its lengths: $\ell(\mathcal T) = (\ell_{\mathcal T}(\mathtt{0...0}), \dotsc, \ell_{\mathcal T}(\mathtt{1...1}))$.
\end{remark}

\section{Optimal pool tests}

We have now introduced everything necessary to state our goal mathematically.
Our objective is to identify the best performing testing procedure $\mathcal T$ (i.e. having the smallest $L_{\mathcal T}$) in a given situation, i.e. knowing $\Pr(\omega)$ for all $\omega \in \Omega$.

\subsection{Generating all procedures}
We can now explain how to generate all the testing procedures for a given $n \geq 2$.

One straightforward method is to implement a generation algorithm based on the definition of a testing procedure. Algorithm~\ref{alg:gen} does so recursively by using a coroutine. The complete list of testing procedures is recovered by calling $\texttt{FindProcedure}(\Omega, \Omega \setminus \{\emptyset\} )$.

\newalg%
{FindProcedure\label{alg:gen}}%
{$S\in\Omega$, $C\in \Omega$.}%
{A binary tree.}%
{\begin{enumerate}
\item if $|S| == 1$ then return $S$
\item $S_\bot' = S_\top' = C' = \emptyset$
\item for each $c \in C$
\item \quad  $S_\bot = S_c^\bot$ 
\item \quad  $S_\top = S_c^\top$
\item \quad if $S_\bot \notin S_\bot'$ and $S_\top \notin S_\top'$
\item \qquad  $S_\bot' = S_\bot' \cup \{S_\bot\}$
\item \qquad $S_\top' = S_\top' \cup \{S_\top\}$ 
\item \qquad $C' = C' \cup \{c\}$
\item for $i \in \{1, \dotsc, |C'|\}$
\item \quad $\overline C = C - C'[i]$ 
\item \quad for each $\mathcal T_\bot \in \operatorname{FindProcedure}(S_\bot'[i], \overline{C})$
\item \qquad for each $\mathcal T_\top \in \operatorname{FindProcedure}(S_\top'[i], \overline{C})$
\item \qquad \quad yield $(C'[i], \mathcal T_\bot, \mathcal T_\top)$
\end{enumerate}}

We implemented this algorithm (in Python, source code available upon request). The result of testing procedure generations for small values of $n$ is summarized in Table~\ref{tab:gen}. The number of possible testing procedures grows very quickly with $n$.

\begin{table}[!ht]
\centering\small
\caption{Generation results for some small $n$}\label{tab:gen}
\begin{tabular}{crr}\toprule 
$\quad n\quad$ & Number of procedures &  Time \\\midrule 
1 & 1 & 0\\
2 & 4 & $\sim 0$\\
3 & 312 & $\sim0$\\
4 & 36585024 & \quad $\sim30$ mn \\\bottomrule 
\end{tabular}
\end{table}

An informal description of Algorithm~\ref{alg:gen} is the following. Assuming that you have an unfinished procedure (i.e. nodes at the end of branches are not all leaves). For those nodes $S$, compute for each $T$ the sets $S_T^{\top}$ and $S_T^{\bot}$. If either is empty, abort. Otherwise, create a new (unfinished) procedure, and launch recursively on nodes (not on leaves, which are such that $S$ has size $1$).

Algorithm~\ref{alg:gen} terminates because it only calls itself with strictly smaller arguments. We will discuss this algorithm further after describing some properties of the problem at hand.

\subsection{Metaprocedures}

Once the optimality zones, and the corresponding testing procedures, have been identified, it is easy to
write an algorithm which calls the best testing procedure in every scenario. At first sight, it may seem that
nothing is gained from doing so --- but as it turns out that only a handful of procedures need to be implemented.

This construction is captured by the following definition:
\begin{definition}[Metaprocedure]
A \emph{metaprocedure} $\mathcal M$ is a collection of pairs $(Z_i, \mathcal T_i)$ such that:
\begin{enumerate}
\item $Z_i \subseteq [0, 1]^n$, $Z_i \cap Z_j = \emptyset$ whenever $i \neq j$ and $\bigsqcup_i Z_i = [0,1]^n$.
\item $\mathcal T_i$ is a testing procedure and for any testing procedure $\mathcal T$, 
\begin{equation*}
\forall x \in Z_i, \quad L_{\mathcal T_i}(x) \leq L_{\mathcal T}(x).
\end{equation*}
\end{enumerate}
A metaprocedure is interpreted as follows: Given $x \in [0, 1]^n$ find the unique $Z_i$ that contains $x$ and run the corresponding testing procedure $\mathcal T_i$. We extend the notion of expected length accordingly:$
L_{\mathcal M} = \min_{i} L_{\mathcal T_i} \leq n
$.
\end{definition}

One way to find the metaprocedure for $n$, is to enumerate all the testing procedures using Algorithm~\ref{alg:gen}, compute all expected lengths $L_{\mathcal T}$ from the tree structure, and solve polynomial inequalities.

Surprisingly, a vast majority of the procedures generated are nowhere optimal: This is illustrated in Table~\ref{tab:meta}. Furthermore, amongst the remaining procedures, there is a high level of symmetry. For instance, in the case $n = 3$, 8 procedures appear 6 times, 1 a procedure appears 3 times, and 1 procedure appears once. The only difference between the occurrences of these procedures --- which explains why we count them several times --- is the action of the symmetric group $S_6$ on the cube (see Appendix~\ref{app:sym} for a complete description).

\iflongversion
The metaprocedure for $n=3$ cuts the unit cube into 52 zones, which correspond to a highly symmetric and intricate partition, as illustrated in Figures~\ref{fig:meta3}, \ref{fig:meta3b}, and \ref{fig:meta3c}. An STL model was constructed and is available upon request.
\else 
The metaprocedure for $n=3$ cuts the unit cube into 52 zones, which correspond to a highly symmetric and intricate partition, as illustrated in Figure~\ref{fig:meta3} (3D figures in the full version of this paper). An STL model was constructed and is available upon request.
\fi

\begin{figure}[!ht]
\hspace{-1.5cm}
\centering
\includegraphics[height=4.5cm]{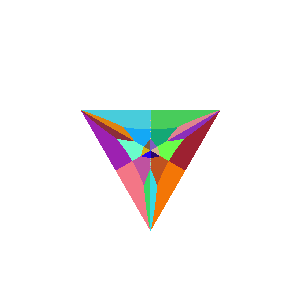}%
\includegraphics[height=4.5cm]{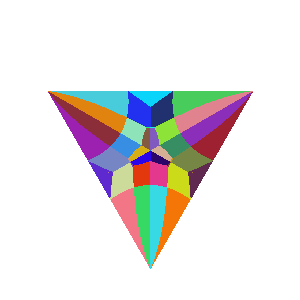}%
\includegraphics[height=4.5cm]{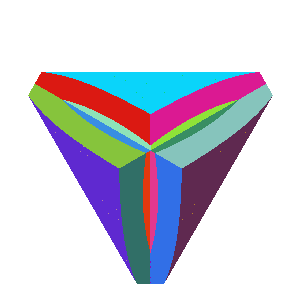} 
\includegraphics[height=4.5cm]{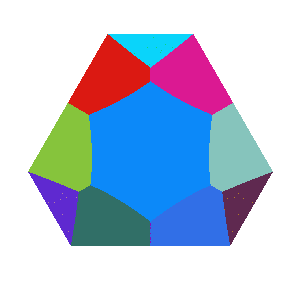}%
\includegraphics[height=4.5cm]{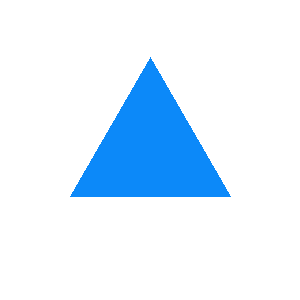}%
\caption{Slices of the cube decomposition for the $n=3$ metaprocedure, each colour corresponds to a different strategy, which is optimal at this position. The slices are taken orthogonally to the cube's main diagonal, with the origin at the center of each picture. Each color corresponds to a procedure. The symmetries are particularly visible.}\label{fig:meta3}
\end{figure}

\iflongversion
\begin{figure}[p]
\centering
\includegraphics[height=5cm]{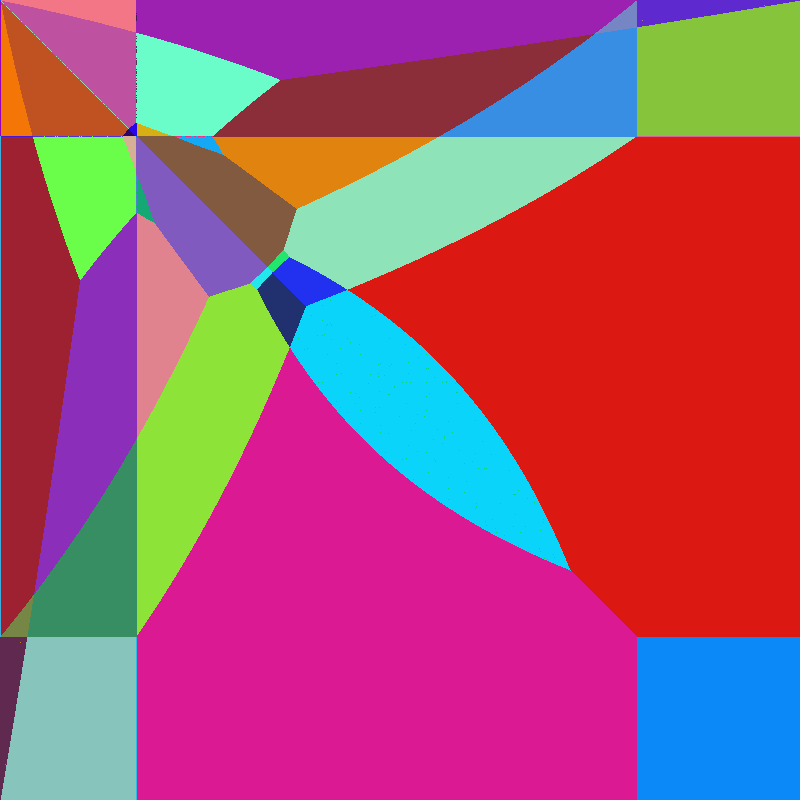}\qquad 
\includegraphics[height=5cm]{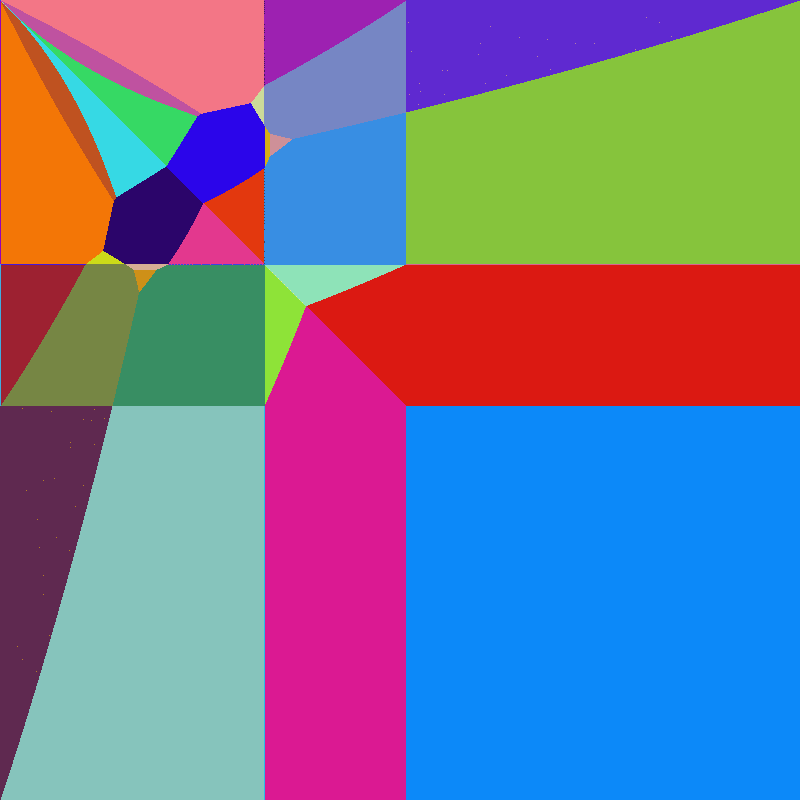}
\caption{Slices through the cube at the $z=0.17$ (left) and the $z=0.33$ (right) planes, showing the metaprocedure's rich structure. Each colour corresponds to a different strategy, which is optimal at this position. The origin is at the top left.}\label{fig:meta3b}
\end{figure}

\begin{figure}[p]
\centering 
\includegraphics[height=4.5cm]{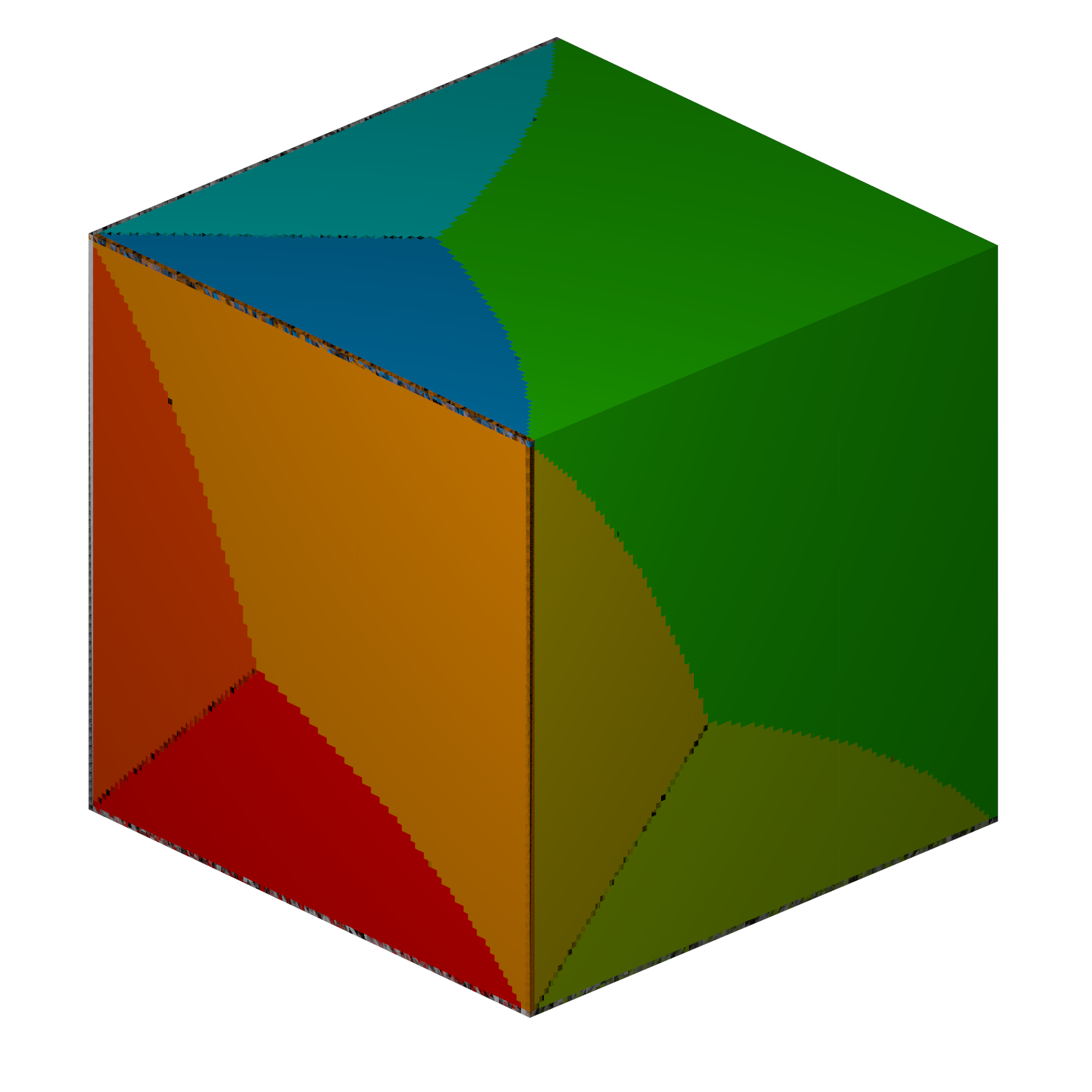}\quad 
\includegraphics[height=5cm]{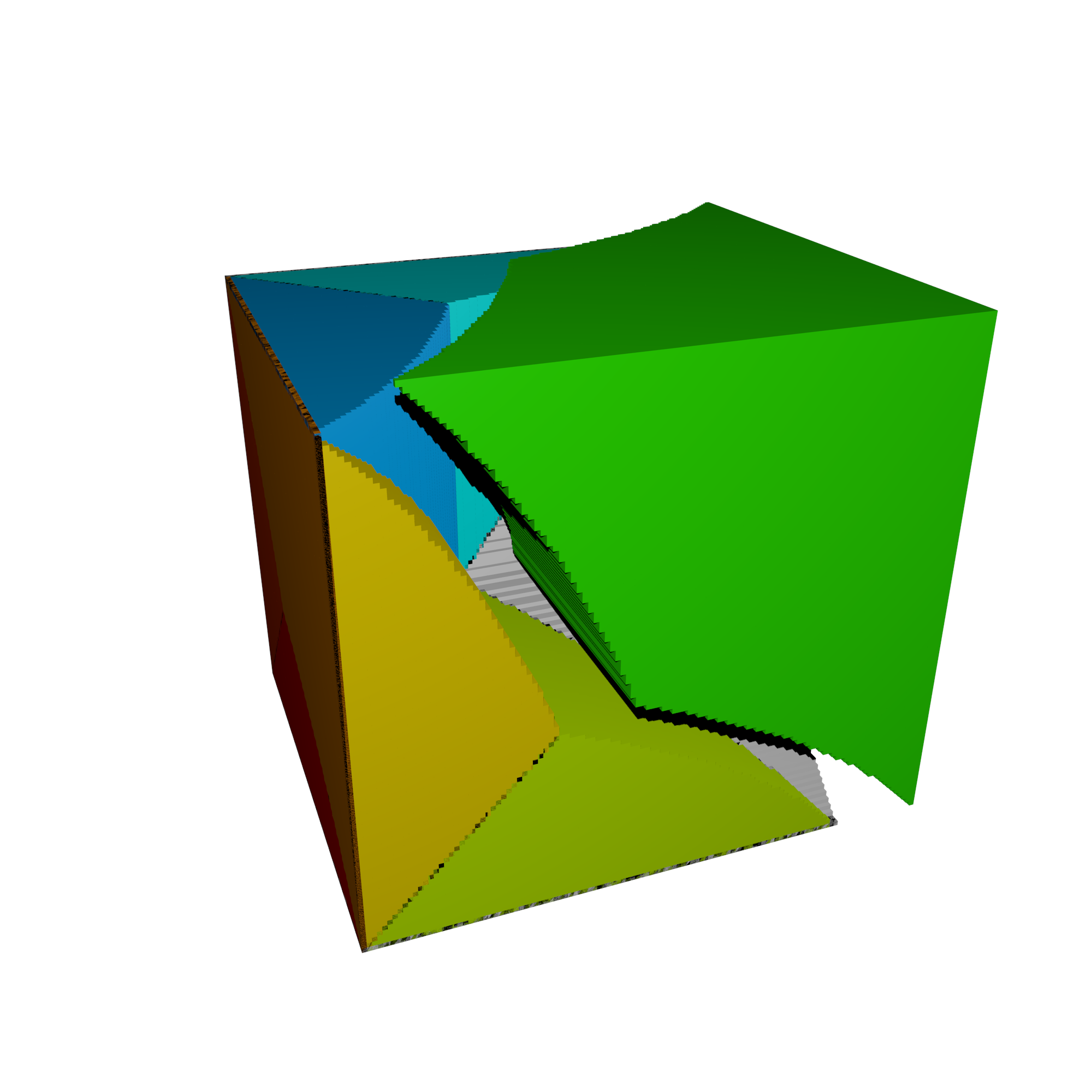}\quad
\includegraphics[height=5cm]{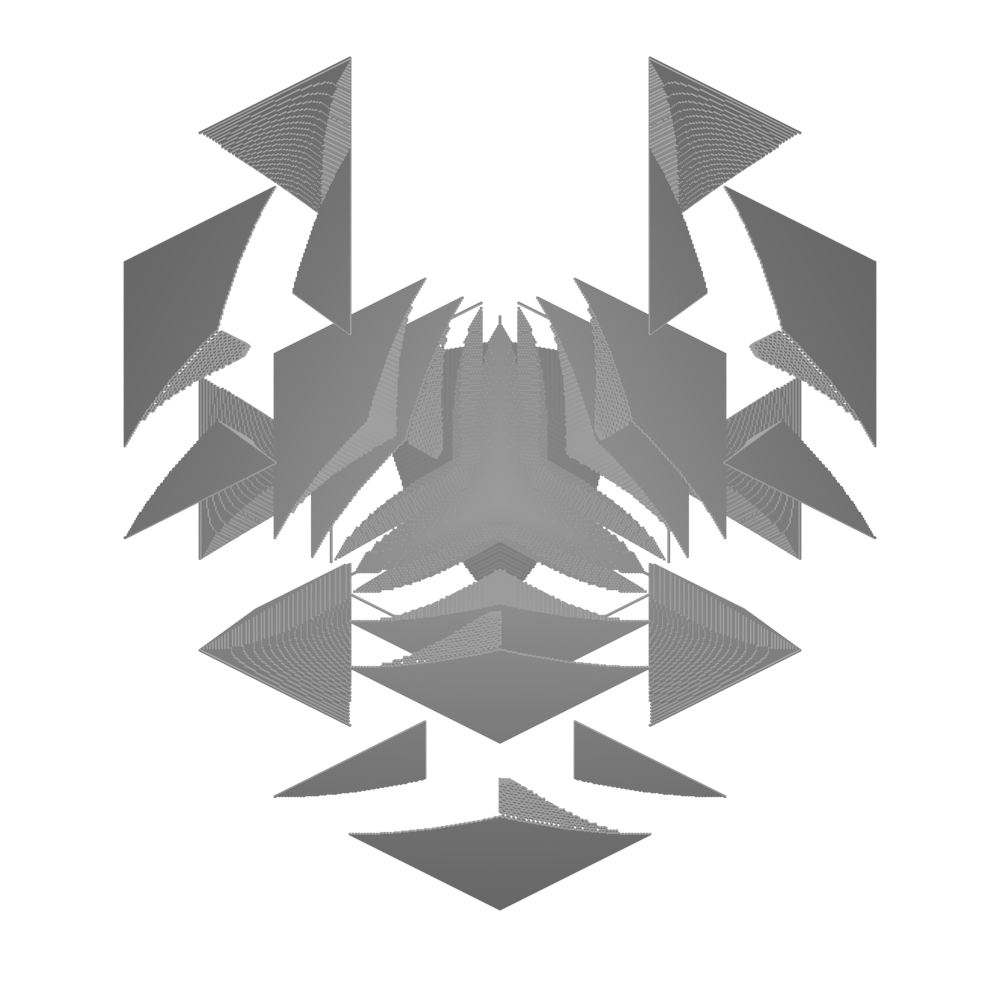}
\caption{A 3D visualisation of the cube. Left: exterior, where it is visible that each face has the same decomposition as the 2D problem; Middle: with the naive algorithm region slightly removed, showing that it accounts for slightly less than half of the total volume; Right: exploded view of the 52 substructures (looking from $(-1, -1, -1)$).}\label{fig:meta3c}
\end{figure}
\fi 

The large number of suboptimal procedures shows that the generate-then-eliminate approach quickly runs out of steam: Generating all procedures for $n=6$ seems out of reach with Algorithm~\ref{alg:gen}\footnote{$n=6$ is still very far from the current \textsc{sars-cov-2} test pooling capacity of $n=32$ or $n=64$ mentioned in the introduction.}. The number of zones, which corresponds to the number of procedures that are optimal in some situation, is on the contrary very reasonable.
\begin{lemma}[Number of naive procedures]
Let $n \geq 1$, then there are
\begin{equation*}
P(n) = \prod_{k=1}^{n}k^{2^{n-k}}
\end{equation*}
equivalent naive procedures.
\end{lemma}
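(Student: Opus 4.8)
The plan is to count naive procedures by a recurrence on $n$. Write $N(n)$ for the number of naive procedures on $n$ samples; the goal is to show $N(n) = P(n)$. First I would pin down the structure forced on such a tree. By definition a naive procedure is a testing procedure every internal node of which carries a \emph{singleton} selection $T = \{i\}$, i.e. each test is performed on a single sample. I claim that along any root-to-leaf path each of the $n$ samples is tested exactly once. No sample can be tested twice on a path, for retesting an already-resolved sample is redundant and, by the remark following Definition~\ref{def5}, would force one child to carry an empty $S$, which is forbidden. Conversely every sample \emph{must} be tested on the path: if some sample went untested, its status would remain undetermined at the leaf, so the leaf's label $S$ would contain more than one string, contradicting the requirement that leaves correspond to single outcomes. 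Hence every path has length exactly $n$, and a naive procedure is a perfect binary tree of depth $n$ with $2^n$ leaves.

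Next I would extract the recurrence from this structure. At the root a naive procedure must choose which single sample to test, giving $n$ choices. Once sample $i$ has been tested, both children face the identical residual task of resolving the statuses of the remaining samples $[n]\setminus\{i\}$ using only individual tests; by relabelling this is exactly a naive procedure on $n-1$ samples, of which there are $N(n-1)$. The crucial point is that the left subtree (reached when $i$ tests negative) and the right subtree (positive) may be chosen completely independently, since neither constrains the other. This yields
\[
N(n) = n \, N(n-1)^2, \qquad N(1) = 1,
\]
where the base case records that a single sample admits only the one-node procedure.

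Finally I would solve the recurrence, most cleanly by checking that $P(n) = \prod_{k=1}^{n} k^{2^{n-k}}$ satisfies it. The base case is immediate since $P(1) = 1^{1} = 1$. For the induction, squaring $P(n-1) = \prod_{k=1}^{n-1} k^{2^{n-1-k}}$ doubles every exponent, giving $P(n-1)^2 = \prod_{k=1}^{n-1} k^{2^{n-k}}$, and multiplying by $n = n^{2^{0}}$ appends precisely the $k=n$ factor, so $n\,P(n-1)^2 = P(n)$, as required. I expect the only real obstacle to be the structural characterisation of the first paragraph: the two \emph{exactly once} claims rest respectively on the non-redundancy remark after Definition~\ref{def5} and on the leaves-are-singletons condition, and the \emph{independence} of the two subtrees is exactly what produces the square in the recurrence (and hence the doubling of exponents in the product). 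Once that characterisation is secured, the recurrence and its closed-form verification are entirely routine.
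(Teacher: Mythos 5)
Your proof is correct and takes essentially the same route as the paper's: the paper likewise counts by choosing the root test among the $n$ (there written $n+1$) samples and independently choosing the two subtrees, obtaining the recurrence $P(n+1)=(n+1)P(n)^2$ and concluding. Your version simply fills in details the paper leaves implicit, namely the structural characterisation of naive procedures as perfect depth-$n$ trees and the explicit check that the product formula solves the recurrence.
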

\begin{proof}
By induction on $n$: There are $(n+1)$ choices of a root node, $P(n)$ choices for the left child, and $P(n)$ choices for the right child. This gives the recurrence $P(n+1) = (n+1)P(n)^2$, hence the result. 
\end{proof}
This number grows rapidly and constitutes a lower bound for the total number of procedures (e.g. for $n = 8$ we have $P(n) > 2^{184}$). On the other hand, the naive procedure is the one with maximal multiplicity, which yields a crude upper bound $C_{2^k} P(n)$ on the number of procedures, where $C_t$ is the $t$-th Catalan number defined by:

$$C_t = \frac{1}{t+1}{2t\choose t} \sim \frac{4^n}{n^{3/2}\sqrt{\pi}}$$

\begin{table}[!ht]
\centering \small
\begin{tabular}{crr}\toprule 
$\quad n\quad$ & Number of procedures & \quad Zones \\\midrule 
1 & 1 & 1\\
2 & 4 & 3\\
3 & 312 & 52\\
4 & 36585024 & 181\\
5 & $8.926\cdot{}10^{20}$ & ? \\
6 & $2.242\cdot{}10^{55}$ & ? \\\bottomrule \\
\end{tabular}
\caption{Procedures and metaprocedures for some values of $n$. The number of zones for $n = 5$ and $6$ 
cannot be determined in a reasonable time with the generate-then-eliminate approach.}\label{tab:meta}
\end{table}
The zones can be determined by sampling precisely enough the probability space. Simple arguments about the regularity of polynomials guarantee that this procedure succeeds, when working with infinite numerical precision. In practice, although working with infinite precision is feasible (using rationals), we opted for floating-point numbers, which are faster. The consequence is that sometimes this lack of precision results in incorrect results on the zone borders --- however this is easily improved by increasing the precision or checking manually that there is no sub-zone near the borders.

\section{Pruning the generation tree}

We now focus on some of the properties exhibited by testing procedures, which allows a better understanding of the problem and interesting optimizations. This in effect can be used to prune early the generation of procedures, and write them in a more 
compact way by leveraging symmetries. We consider in this section a testing procedure $\mathcal T$. 

\begin{lemma}\label{prop:len}
Let $B_0$ and $B_1$ be two binary strings of size $n$, that only differ by one bit (i.e. $B_0[i] = 0$ and $B_1[i] = 1$ for some $i$). Then $\ell_{\mathcal T}(B_0) \leq \ell_{\mathcal T}(B_1)$.
\end{lemma}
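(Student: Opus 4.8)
\emph{Approach.} The plan is to argue by induction on the height of the testing procedure $\mathcal{T}$, tracking the two root-to-leaf paths prescribed by $B_0$ and $B_1$. If $\mathcal{T}$ is a single leaf the claim is vacuous, since two \emph{distinct} strings cannot both label the unique leaf. Otherwise, write $T$ for the root test and let $\mathcal{T}_{\bot}$, $\mathcal{T}_{\top}$ be the subtrees rooted at the children carrying $S_T^{\bot}$ and $S_T^{\top}$. A string $\omega$ is routed at the root according to the value of $\phi(T)$ consistent with it, and $\ell_{\mathcal{T}}(\omega)=1+\ell_{\mathcal{T}'}(\omega)$, where $\mathcal{T}'$ is the subtree it enters; the whole argument then reduces to understanding how $B_0$ and $B_1$ are routed at the root.

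\emph{The routine cases.} First I would treat $i\notin T$: bit $i$ plays no role in the test on $T$, and since $B_0,B_1$ agree on every other coordinate they are routed to the \emph{same} child $\mathcal{T}'$, where they still differ only in bit $i$. As observed above that child cannot be a single leaf, so the induction hypothesis yields $\ell_{\mathcal{T}'}(B_0)\le\ell_{\mathcal{T}'}(B_1)$, and adding the common $+1$ preserves the inequality. The second routine case is $i\in T$ with $B_0$ and $B_1$ nonetheless routed to the \emph{same} child: because the two strings coincide away from $i$, this can only happen when the value of $\phi(T)$ is already forced by the coordinates of $T$ they share, in which case both land in the same sub-collection differing only in bit $i$, and the induction hypothesis again closes the case.

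\emph{The crux.} The delicate case is $i\in T$ together with a \emph{divergence}: the coordinates of $T$ other than $i$ take, in both strings, the value that leaves $\phi(T)$ undecided, so that flipping bit $i$ flips the branch taken at the root. Then $B_0$ and $B_1$ fall into different subtrees and, up to the shared $+1$, the inequality reduces to the cross-subtree comparison $\ell_{\mathcal{T}_{\bot}}(B_0)\le\ell_{\mathcal{T}_{\top}}(B_1)$. This is the step I expect to be the main obstacle: $\mathcal{T}_{\bot}$ and $\mathcal{T}_{\top}$ are, a priori, two unrelated procedures on the two halves of the split, so no depth comparison between them is available for free. My plan is to exploit the rigidity of the subtree that $B_1$ enters --- every selection in it already has all of $T$ pinned down, so that subtree may never test a subset of $T$ and each of its tests must probe a coordinate outside $T$ --- and to convert any resolution of $B_1$ in $\mathcal{T}_{\top}$ into a resolution of $B_0$ of no greater length, the guiding intuition being that the outcome carrying the extra $1$ at coordinate $i$ cannot be cheaper to certify than $B_0$. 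Making this transformation explicit, and checking that it respects the splitting rules of Definition~\ref{def5}, is where the real work lies; the base case and the two routine cases then assemble with it to complete the induction.
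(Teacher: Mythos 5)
You have correctly isolated the structure of the problem, and your case analysis is sound as far as it goes: the cases $i \notin T$, and $i \in T$ with both strings routed to the same child, do reduce cleanly by induction, and the only real content is the divergence case, where the claim becomes the cross-subtree inequality $\ell_{\mathcal T_\bot}(B_0) \leq \ell_{\mathcal T_\top}(B_1)$. But you do not prove that inequality --- you only sketch a hoped-for transformation --- so as submitted the proof is incomplete precisely at its crux. For comparison, the paper argues globally rather than by induction: it observes that splits commute, $(S_T^b)_{T'}^{b'} = (S_{T'}^{b'})_T^b$, writes the root-to-leaf path of $B_1$ as a sequence $(\Omega)_{T_1 \cdots T_k}^{\beta_1 \cdots \beta_k} = \{B_1\}$ with $k = \ell_{\mathcal T}(B_1)$, locates the separating index $i$ (necessarily with $\beta_i = \top$ and $T_i = \{i_0\} \cup E$ where the other coordinates of $T_i$ are non-informative for both strings --- exactly your characterisation of the divergence test), and flips $\beta_i$ to $\neg\beta_i$ to obtain a length-$k$ sequence isolating $\{B_0\}$. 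Note, however, what this actually establishes: that \emph{some} sequence of $k$ splits isolates $B_0$, i.e.\ that a short path for $B_0$ exists in \emph{some} procedure --- not that the path to $B_0$ in the \emph{given} tree $\mathcal T$ has length at most $k$. After the divergence, the subtree containing $B_0$ is an arbitrary procedure on its selection set, and nothing in Definition~\ref{def5} forces it to resolve $B_0$ within the remaining budget. So the paper's proof elides exactly the step you flagged as the obstacle.

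Your instinct that this step is ``the main obstacle'' is in fact stronger than you suspected: the cross-subtree inequality, and with it the lemma read as a pointwise statement about \emph{every} procedure satisfying Definition~\ref{def5}, is false, so no transformation of the kind you plan can exist for arbitrary $\mathcal T$. Working in the convention the paper's proof itself uses (the $\top$ side of a split on $T$ collects the outcomes having a $1$ somewhere in $T$; this is the only reading consistent with the proof's condition $B_0[j] = B_1[j] = 0$ on $E$ and with Figure~\ref{facto2}), take $n = 4$ and build the following legal procedure: at the root test $\{1\}$; on the $\top$ side $\{\mathtt{1{*}{*}{*}}\}$ test $\{3,4\}$, and on its $\bot$ side $\{\mathtt{1000}, \mathtt{1100}\}$ test $\{2\}$, so that $\ell(\mathtt{1100}) = 3$; on the root's $\bot$ side test $\{3\}$, then on $\{\mathtt{0001}, \mathtt{0100}, \mathtt{0101}\}$ test $\{2\}$, then on $\{\mathtt{0100}, \mathtt{0101}\}$ test $\{4\}$, so that $\ell(\mathtt{0100}) = 4$; complete the remaining branches arbitrarily (every split above has two nonempty sides, so this is a valid testing procedure, and it avoids the excluded all-zero outcome). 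Then $B_0 = \mathtt{0100}$ and $B_1 = \mathtt{1100}$ differ only in bit $1$, yet $\ell_{\mathcal T}(B_0) = 4 > 3 = \ell_{\mathcal T}(B_1)$: the subtree on $B_0$'s side has simply been built wastefully, which the definition of a testing procedure permits. (The statement fares no better under the literal reading of $\mathcal S_T^\top = \{S : S \wedge T = T\}$: there it already contradicts the paper's own Figure~\ref{facto2}, where $\ell(\mathtt{10}) = 2 > 1 = \ell(\mathtt{11})$.) The lemma can only be rescued by an additional hypothesis --- e.g.\ restricting to procedures optimal at some point of $(0,1)^n$, where an exchange argument along the lines of the paper's flipped-sequence construction becomes plausible --- and such a hypothesis is what both your induction and the paper's argument would need to invoke at the divergence step.
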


\begin{proof}
First notice that for all $T$, $T'$, and $b, b' \in \{\top,\bot\}$ we have $(S_T^b)_{T'}^{b'} = (S_{T'}^{b'})_T^b$.
We will denote both by $S_{TT'}^{bb'}$.

We have the following : If there exists $k$, $T_1, \dotsc, T_k$, and $\beta_1, \dotsc, \beta_k$ such that
\begin{equation*}
(\Omega)_{T_1 \cdots T_k}^{\beta_1 \cdots \beta_k} = \{B_1\}
\end{equation*}
then there exists $i \leq k$ such that
\begin{equation*}
(\Omega)_{T_1 \cdots T_i \cdots T_k}^{\beta_1 \cdots \neg \beta_i \cdots \beta_k} = \{B_0\}
\end{equation*}
Indeed there exists $i \leq k$ such that $\beta_i = \top$ and $T_i = \{i_0\} \cup E$ where 
for all $j$ in $E$, $B_0[j] = B_1[j] = 0$. This yields 
\begin{equation*}
(\Omega)_{ T_1 \cdots T_{i-1} T_{i+1} \cdots T_k}^{\beta_1\cdots \beta_{i-1} \beta_{i+1} \cdots \beta_k } = \{B_0,B_1\}
\end{equation*}
and the result follows.
\end{proof}

\begin{remark}
Proposition~\ref{prop:len} indicates that testing procedures are, in general, unbalanced binary trees: The only balanced procedure being the naive one.
\end{remark}

\begin{lemma}
If $\mathcal N$ is the naive procedure, then for any testing procedure $\mathcal T$ and for all
$x_1, \dots, x_n$ such that $x_i > \frac12$,
\begin{equation*}
L_{\mathcal N}\left(x_1, \dotsc, x_n \right) \leq L_{\mathcal T}\left(x_1, \dotsc, x_n \right).
\end{equation*}
In other terms $\{\forall i \in [n], \frac12 \leq x_i \leq 1 \}$ is contained in the naive procedure's optimality zone.
\end{lemma}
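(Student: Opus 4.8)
The plan is to reduce the inequality to a statement about the uniform distribution, exploiting the monotonicity of $\ell_{\mathcal T}$ provided by Lemma~\ref{prop:len}. Since the naive procedure is the complete binary tree of depth $n$, all of its $2^n$ leaves sit at depth $n$, so $L_{\mathcal N}(x) = n$ identically. It therefore suffices to show that $L_{\mathcal T}(x) \geq n$ for every testing procedure $\mathcal T$ whenever $x = (x_1, \dots, x_n)$ satisfies $x_i \geq \tfrac12$ for all $i$ (which covers both forms of the hypothesis).

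First I would record two structural facts. (i) By Lemma~\ref{prop:len}, flipping any single coordinate of an outcome from $0$ to $1$ can only increase its length; chaining such one-bit flips upgrades this to full \emph{coordinatewise monotonicity}, namely $\omega \preceq \omega' \Rightarrow \ell_{\mathcal T}(\omega) \leq \ell_{\mathcal T}(\omega')$ for the product partial order on $\{0,1\}^n$. (ii) A testing procedure is a full binary tree (Definition~\ref{def5}: every internal node has exactly two children) whose $2^n$ leaves are the outcomes, so Kraft's equality holds: $\sum_\omega 2^{-\ell_{\mathcal T}(\omega)} = 1$.

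Next I would compare $L_{\mathcal T}(x)$ with its value under the uniform law $u(\omega) = 2^{-n}$, which is exactly the product measure $\Pr(\,\cdot\,)$ with every parameter set to $\tfrac12$. Because each $x_i \geq \tfrac12$, the product-Bernoulli law at $x$ stochastically dominates $u$, and I would exhibit this by an explicit monotone coupling: drawing one family $U_1, \dots, U_n$ of independent uniforms on $[0,1]$ and setting $\omega_i = \mathbf{1}[U_i \leq x_i]$ and $\omega'_i = \mathbf{1}[U_i \leq \tfrac12]$ gives $\omega \succeq \omega'$ surely. Applying the monotonicity from (i) yields $\ell_{\mathcal T}(\omega) \geq \ell_{\mathcal T}(\omega')$ pointwise, and taking expectations gives $L_{\mathcal T}(x) = \operatorname{E}_p[\ell_{\mathcal T}] \geq \operatorname{E}_u[\ell_{\mathcal T}]$.

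Finally I would bound the uniform expectation by $n$ using fact (ii) and convexity: since $t \mapsto 2^{-t}$ is convex, Jensen's inequality gives $2^{-n} = \sum_\omega u(\omega)\, 2^{-\ell_{\mathcal T}(\omega)} = \operatorname{E}_u[2^{-\ell_{\mathcal T}}] \geq 2^{-\operatorname{E}_u[\ell_{\mathcal T}]}$, hence $\operatorname{E}_u[\ell_{\mathcal T}] \geq n$. Combining the two estimates yields $L_{\mathcal T}(x) \geq n = L_{\mathcal N}(x)$, as required. The only routine points to check are that Lemma~\ref{prop:len} indeed upgrades to coordinatewise monotonicity and that the tree genuinely realises Kraft's equality over its $2^n$ leaves; the crux of the argument is the observation that the hypothesis $x_i \geq \tfrac12$ is precisely what makes $p$ dominate the uniform law, against which the entropy-type Kraft--Jensen bound is tight and attained by the naive procedure.
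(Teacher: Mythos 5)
Your proof is correct, and although it shares the paper's overall skeleton---reduce to the centre point $(\tfrac12, \dotsc, \tfrac12)$ via monotonicity derived from Lemma~\ref{prop:len}, then show the naive procedure is optimal there---you execute both steps by genuinely different means. For the reduction, the paper invokes the corollary $\partial_{x_i} L_{\mathcal T} \geq 0$ together with the fact that $L_{\mathcal N} \equiv n$ is constant; your monotone coupling of the product-Bernoulli law at $x$ against the uniform law proves the same comparison $L_{\mathcal T}(x) \geq \operatorname{E}_u[\ell_{\mathcal T}]$ without differentiating the length polynomial, and the two devices are interchangeable since both rest on chaining the one-bit monotonicity of $\ell_{\mathcal T}$. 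The real divergence is at the centre point: the paper argues combinatorially that the perfect tree uniquely minimises $\sum_\omega \ell_{\mathcal T}(\omega)$ because any \enquote{unbalancing} move (removing two bottom-level leaves and regrafting a pair under another leaf) changes the sum by $+1$; as written this is an informal exchange argument, since it tacitly assumes every testing procedure is reachable from the perfect tree by a sequence of such cost-monotone moves, which the paper does not establish. Your Kraft--Jensen step---$\sum_\omega 2^{-\ell_{\mathcal T}(\omega)} = 1$ for a full binary tree, hence $2^{-n} = \operatorname{E}_u\left[2^{-\ell_{\mathcal T}}\right] \geq 2^{-\operatorname{E}_u[\ell_{\mathcal T}]}$ and so $\operatorname{E}_u[\ell_{\mathcal T}] \geq n$---proves the same fact rigorously in two lines, with equality exactly for the depth-$n$ perfect tree, so your route is both tighter and more self-contained. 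One pedantic caveat: the paper defines $\Omega = \mathcal{P}([n])\setminus\{\emptyset\}$, which would give $2^n - 1$ leaves, but its figures, its probability assignments (the all-negative outcome has positive probability), and the very existence of a perfect naive tree all show the intended outcome space is $\{0,1\}^n$ with $2^n$ elements; your leaf count is the correct reading, and Kraft's equality applies exactly as you use it.
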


\begin{proof}
An immediate corollary of Proposition~\ref{prop:len} is that for all $i \in [n]$, we have $\partial_{x_i} L_{\mathcal T}(x_1, \dotsc, x_n) \geq 0$,
where $\partial_{x_i}$ indicates the derivative with respect to the variable $x_i$. Since the native procedure has a constant
length, it suffices to show that it is optimal at the point $\{\frac12, \dotsc, \frac12 \}$. Evaluating the length polynomials
at this point gives
\begin{equation*}
L_{\mathcal T}\left( \frac12, \dotsc, \frac12\right) = \frac{1}{2^n} \sum_{\omega \in \Omega} \ell_{\mathcal T}(\omega) = \int_{[0, 1]^n} L_{\mathcal T}\, \mathrm dx.
\end{equation*}
Now remember that the naive procedure gives the only perfect tree. It suffices to show that unbalancing this tree in
any way results in a longer sum in the equation above. Indeed, to unbalance the tree one needs to:
\begin{itemize}
\item Remove two bottom-level leaves, turning their root node into a leaf
\item Turn one bottom-level leaf into a node
\item Attach two nodes to this newly-created leaf
\end{itemize}
The total impact on the sum of lengths is $+1$. Hence the naive algorithm is minimal at $\{\frac12, \dotsc, \frac12\}$,
and therefore, in the region $\{\forall i \in [n], \frac12 \leq x_i \leq 1 \}$.
\end{proof}
\begin{remark}
This also show that if we assume that the probabilities are supposed uniform (ie. we assume no a priori knowledge) the optimal procedure is the naive one.  Therefore we can see that the gain for $n=3$  is aprroximately $0,34$ since the optimal procedure in average gives $2,66$.  In percentage the gain is $15\%$. If the probabilites are very low we have a gain of almost $2$  which is $3$ times faster. As expected it is much more interesting if we think that the samples have a good chance to be negative, which is the case in most real life scenarii.
\end{remark}

\begin{lemma}
If the root has a test of cardinality one, then the same algorithms starting at both sons have same expected
stopping time. This applies if the next test is also of cardinal one.
\end{lemma}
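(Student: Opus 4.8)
The plan is to read \enquote{expected stopping time starting at a son} as the \emph{conditional} expected number of tests performed in that son's subtree, given that the procedure has branched into it, and to show that these two conditional expectations coincide when the root test is a singleton and the same sub-procedure is run below both sons. First I would fix the root test $T=\{i\}$ and call $\mathcal A$ the common sub-procedure run on the remaining samples $[n]\setminus\{i\}$ at both children. The key structural observation is that, because $|T|=1$, the test $\phi(\{i\})$ resolves the single bit $i$ completely: the $\bot$-son collects exactly the outcomes with bit $i$ equal to $0$ and the $\top$-son exactly those with bit $i$ equal to $1$, and in either case the residual task is to determine the other $n-1$ bits. Each outcome $\omega$ with bit $i=b$ is in bijection with an outcome $\omega'$ of the $(n-1)$-sample problem, and since both sons run the same $\mathcal A$ we have $\ell_{\mathcal T}(\omega)=1+\ell_{\mathcal A}(\omega')$ independently of $b$.

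Next I would compute each conditional expectation explicitly. Writing $p_b=\Pr(\text{bit }i=b)$, the contribution of the $\bot$-son to $L_{\mathcal T}$ is $\sum_{\omega:\,\text{bit }i=0}\ell_{\mathcal A}(\omega')\Pr(\omega)$; dividing by $p_0$ to condition on reaching that son and invoking the independence hypothesis of the probability Lemma, the joint law of the remaining bits factorizes and is unaffected by the value of bit $i$, so $\Pr(\omega)/p_0=\Pr(\omega')$ is exactly the marginal law of $\omega'$. Hence the conditional expected stopping time at the $\bot$-son equals $\sum_{\omega'}\ell_{\mathcal A}(\omega')\Pr(\omega')=L_{\mathcal A}$, and the identical computation at the $\top$-son also yields $L_{\mathcal A}$. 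The two are therefore equal, which is the assertion; as a byproduct one obtains the clean decomposition $L_{\mathcal T}=1+L_{\mathcal A}$.

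The step I expect to be the crux, and where the cardinality-one hypothesis is indispensable, is the claim that both branches present the identical residual sub-problem, so that \enquote{the same algorithm} is even applicable to both. This fails as soon as $|T|\geq 2$: with $T=\{i,j\}$ the $\top$-son pins both bits to $1$ and leaves $n-2$ free bits, while the $\bot$-son only learns that not both are $1$ and must still disambiguate $i$ and $j$, so the two children have $S$-sets of different sizes, carry different residual distributions, and are genuinely asymmetric. Restricting to $|T|=1$ is precisely what makes the two children isomorphic as sub-instances. Finally, for the clause \enquote{this applies if the next test is also of cardinal one}, I would simply iterate: if the root of $\mathcal A$ is itself a singleton test, the same argument applied one level down shows its two children again have equal conditional expected stopping time, and the equality propagates along any initial chain of singleton tests.
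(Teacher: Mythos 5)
Your proof is correct and follows essentially the same route as the paper's: both rest on the observation that a singleton root test makes the two sons' residual problems isomorphic --- the outcome sets of the two children biject onto $\{0,1\}^{n-1}$ and, by independence, carry the same conditional distribution, so a common sub-procedure splits and behaves identically in both branches, and the iteration handles chains of singleton tests. Your write-up is somewhat more explicit on the conditional-expectation bookkeeping (including the identity $L_{\mathcal T}=1+L_{\mathcal A}$), whereas the paper additionally records that a test containing already-decided indices can be replaced by its reduction $T\setminus\{1,\dotsc,k\}$ --- a detail your formulation of $\mathcal A$ as a procedure on $[n]\setminus\{i\}$ sidesteps by construction.
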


\begin{proof}
Without loss of generality we can assume that the test is $\{1\}$. We have 
$\{0,1\}^{n^{\top}}_{\{1\}} = \{0b_2 \cdots b_n |b_2 \cdots b_n \in \{0,1 \}^{n-1}\}$ and
$\{0,1\}^{n^{\bot}}_{\{1\}} = \{1b_2 \cdots b_n |b_2 \cdots b_n \in \{0,1 \}^{n-1}\}$.
A test $T$ that doesn't test $1$ applied on those sets will give the same split for both, and the probability that the test answers yes or no is the same. This is also true for the sets and the tests $T$ such that $i$ is not in $T$ for $i$ in $\{1, \dotsc, k\}$.
$\{0^kb_2 \cdots b_n |b_2 \cdots b_n \in \{0,1 \}^{n-k}\}$ and
$\{01^kb_2 \cdots b_n |b_2 \cdots b_n \in \{0,1 \}^{n-k}\}$.
A test $T$ such that there exists $i$ in  $T$ $\{1, \dotsc, k\}$ brings no information for the set of possibilities $\{01^kb_2 \cdots b_n |b_2 \cdots b_n \in \{0,1 \}^{n-k}\}$, but testing this $i$ is useless for the set $\{0^kb_2 \cdots b_n |b_2 \cdots b_n \in \{0,1 \}^{n-k}\}$. So we can apply the test $T - \{1, \dotsc, k\}$.
\end{proof}

\begin{corollary}
If the root has a test of cardinal one, then an optimal algorithm can always apply the same test for the right and left child. If this test is also of cardinal one then the property is still true.
\end{corollary}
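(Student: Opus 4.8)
The plan is to read the corollary off the preceding lemma together with the fact that, once the root test is a singleton, the expected length splits additively with nonnegative weights. First I would fix notation, assuming without loss of generality that the root tests $T=\{1\}$. Then the left child carries the selections whose first bit is $0$, namely $\Omega_{\{1\}}^\bot$, and the right child carries those whose first bit is $1$, namely $\Omega_{\{1\}}^\top$; as sets of selections, both are copies of $\{0,1\}^{n-1}$ indexed by the residual coordinates $2,\dotsc,n$. Any testing procedure placed below either child is thus a testing procedure on this common residual space, once every test $T'$ is replaced by its reduction $T'\setminus\{1\}$ as in the lemma: on the left child a test still mentioning coordinate $1$ would leave its positive part empty, which is forbidden by Definition~\ref{def5}, while on the right child it acts identically to $T'\setminus\{1\}$.

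Next I would write the total expected length in terms of the residual subtrees $\mathcal A_\bot$ and $\mathcal A_\top$ chosen below the two children:
\begin{equation*}
L_{\mathcal T} = 1 + (1-x_1)\,L(\mathcal A_\bot) + x_1\,L(\mathcal A_\top),
\end{equation*}
where $L(\mathcal A)$ denotes the expected length of $\mathcal A$ on the residual problem over $x_2,\dotsc,x_n$, the ``$1$'' accounts for the root test, and $1-x_1$ and $x_1$ are the probabilities of the two branches. The content of the lemma is precisely what licenses using a single function $L(\cdot)$ for both children: it guarantees that any fixed residual algorithm has the same expected stopping time below the left child as below the right child, because conditioning on the first bit leaves the distribution of $(b_2,\dotsc,b_n)$ unchanged and the reduced algorithm ignores the first bit.

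From this separable form the optimisation over $\mathcal A_\bot$ and $\mathcal A_\top$ decouples. Since $1-x_1\ge 0$ and $x_1\ge 0$, the length $L_{\mathcal T}$ is minimised by minimising $L(\mathcal A_\bot)$ and $L(\mathcal A_\top)$ independently; but these are one and the same minimisation problem on the residual space, so a single optimal residual procedure $\mathcal A^\star$ attains both minima. Choosing $\mathcal A_\bot = \mathcal A_\top = \mathcal A^\star$ therefore produces an optimal testing procedure, which is exactly the assertion that an optimal algorithm may apply the same test---indeed the same entire subtree---to both children. For the second sentence I would note that if the root of $\mathcal A^\star$ is itself a singleton, then $\mathcal A^\star$ meets the hypothesis of the lemma on the residual space, so the same argument applies one level deeper; iterating yields the stated propagation down the chain of singleton tests.

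The step I expect to be the main obstacle is making the identification of the two residual problems watertight: one must verify that after the reduction $T'\mapsto T'\setminus\{1\}$ the admissible subtrees below the two children range over literally the same family with literally the same conditional probability weights, handling carefully the tests that touch the already-determined coordinate. This bookkeeping is the substance of the lemma; once it is granted, the nonnegative-weight separation argument above is routine.
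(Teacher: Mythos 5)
Your proposal is correct and follows essentially the same route as the paper: its preceding lemma (identical splits and branch probabilities at the two sons, with tests reduced by $T' \mapsto T' \setminus \{1\}$, impossible on the left and redundant on the right) is precisely your identification of the two residual problems, and the corollary then follows—just as you make explicit via $L_{\mathcal T} = 1 + (1-x_1)L(\mathcal A_\bot) + x_1 L(\mathcal A_\top)$—by installing one common optimal residual subtree below both children. Your iteration for the case where the next test is again a singleton likewise matches the lemma's extension to a chain of cardinality-one tests.
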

This result helps in identifying redundant descriptions of testing procedures, and can be used to narrow down the generation, by skipping over obvious symmetries of the naive procedure (see Figure~\ref{fig:weird}).

\begin{figure}[!ht]
	\centering\scalebox{0.8}{
\begin{tikzpicture}[level/.style={sibling distance=2cm, level distance=0.5cm}]
\Tree[
.\node{(1)} ;
	\edge ; [.\node{(2)} ; 
		\edge ; [.\node{(3)} ;
        	\edge ; [.\node[blue]{\texttt{111}} ; ]
            \edge ; [.\node[blue]{\texttt{110}} ; ]
        ] 
		\edge ; [.\node{(3)} ;
        	\edge ; [.\node[blue]{\texttt{101}} ; ]
            \edge ; [.\node[blue]{\texttt{100}} ; ]
        ] 
	]
    \edge ; [.\node{(3)} ; 
		\edge ; [.\node{(2)} ;
        	\edge ; [.\node[blue]{\texttt{011}} ; ]
            \edge ; [.\node[blue]{\texttt{001}} ; ]
        ] 
		\edge ; [.\node{(2)} ;
        	\edge ; [.\node[blue]{\texttt{010}} ; ]
            \edge ; [.\node[blue]{\texttt{000}} ; ]
        ] 
	]
]
\end{tikzpicture}}
	\caption{Naive algorithm, where the order of tests are unimportant in the left and right branches.}\label{fig:weird} 
\end{figure}
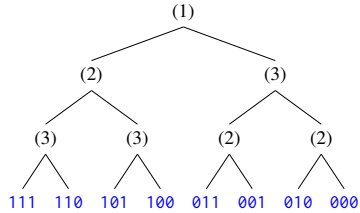

To further accelerate generation we can only keep one representative of each algorithms that have the same expected length for all $x_i$.
\begin{lemma}
If a node labeled $T_1$ has two children that are \emph{both} labeled $T_2$, then we can interchange $T_1$ and $T_2$ without changing the testing procedure's expected length.
\end{lemma}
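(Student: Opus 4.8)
The plan is to exhibit the swapped procedure explicitly and show that it is (a) a legal testing procedure and (b) has the very same length vector $\ell(\mathcal T)$, from which equality of $L_{\mathcal T}$ follows at once. I would write $(S, T_1)$ for the node in question, so that its two children are the internal nodes $(S_{T_1}^\bot, T_2)$ and $(S_{T_1}^\top, T_2)$. Since both children carry the test $T_2$ and are internal (a node labelled by a test is never a leaf), all four grandchildren sets $S_{T_1 T_2}^{b_1 b_2}$ for $b_1, b_2 \in \{\bot, \top\}$ are nonempty. By the commutativity of splitting established at the start of the proof of Proposition~\ref{prop:len}, namely $(S_{T_1}^{b_1})_{T_2}^{b_2} = (S_{T_2}^{b_2})_{T_1}^{b_1} = S_{T_1 T_2}^{b_1 b_2}$, these four sets are symmetric in the roles of $T_1$ and $T_2$.

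Next I would build the candidate swapped subtree, rooted at $(S, T_2)$ with children $(S_{T_2}^\bot, T_1)$ and $(S_{T_2}^\top, T_1)$, and verify it satisfies Definition~\ref{def5}. Using the partition $S = S_{T_1}^\bot \sqcup S_{T_1}^\top$ one gets the distributivity identity $S_{T_2}^{b} = S_{T_1 T_2}^{\bot b} \sqcup S_{T_1 T_2}^{\top b}$, a disjoint union of two nonempty sets; hence both $S_{T_2}^\bot$ and $S_{T_2}^\top$ are nonempty and the root split is legitimate. Splitting each of them further by $T_1$ recovers exactly the four sets $S_{T_1 T_2}^{b_1 b_2}$, again nonempty, so both second-level splits are legitimate as well, and the swapped subtree is a valid testing procedure.

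The final step is to re-attach, below each of the four depth-$2$ positions of the swapped subtree, the subtree that hung below the matching set in the original. This matching is well defined precisely because the four bottom-level sets coincide as an unordered collection in the two orderings, via $S_{T_1 T_2}^{b_1 b_2} = S_{T_2 T_1}^{b_2 b_1}$: only the path reaching a given set changes, not the set itself, and not its depth, which stays equal to $2$ below the local root in both arrangements. Consequently every leaf $\omega$ lies in the same bottom-level subtree before and after the swap and at the same distance from the root, so $\ell_{\mathcal T}(\omega)$ is unchanged for every $\omega$. Since $L_{\mathcal T} = \sum_{\omega} \ell_{\mathcal T}(\omega)\Pr(\omega)$ is determined entirely by this length vector (the probabilities being fixed by the model, not by the procedure), the expected length is preserved.

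I expect the main obstacle to be the bookkeeping in this re-attachment: one must check that the correspondence between the original bottom-level subtrees and the swapped positions is exactly the depth-preserving bijection induced by $S_{T_1 T_2}^{b_1 b_2} = S_{T_2 T_1}^{b_2 b_1}$, so that no subtree is duplicated or dropped and every one stays at depth $2$. Everything else reduces to the commutativity and distributivity of the splitting operation, both of which are already available.
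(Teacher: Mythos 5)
Your proposal is correct, and there is nothing in the paper to compare it against: the paper states this lemma \emph{without any proof} (unlike its neighbours, it is followed immediately by the next observation). Your argument fills that gap soundly, and using exactly the tools the paper already has on hand. The key identity you invoke, $(S_{T}^{b})_{T'}^{b'} = (S_{T'}^{b'})_{T}^{b}$, is indeed established at the start of the paper's proof of Lemma~\ref{prop:len}, and your auxiliary distributivity step $S_{T_2}^{b} = S_{T_1 T_2}^{\bot b} \sqcup S_{T_1 T_2}^{\top b}$ holds because the splitting of Definition~3 is defined elementwise and hence commutes with partitions. Two points you handle correctly but could state more explicitly: first, the four grandchild sets are nonempty because they are node or leaf sets of a valid procedure (by the remark after Definition~\ref{def5}, an empty $S$ cannot occur), and this nonemptiness is what makes the swapped root's children legitimately internal ($|S_{T_2}^{b}| \geq 2$ since it is a disjoint union of two nonempty sets); second, your re-attachment bijection $S_{T_1 T_2}^{b_1 b_2} \mapsto S_{T_2 T_1}^{b_2 b_1}$ preserves the leaf set of the whole tree, so the one-to-one correspondence with $\Omega$ required by Definition~\ref{def5} is maintained. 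Note finally that you prove something slightly stronger than the statement: the entire length vector $\ell(\mathcal T)$ is unchanged, so $L_{\mathcal T}$ is preserved \emph{identically as a polynomial}, not merely at a given probability point --- which is precisely the form needed for the paper's intended application of keeping one representative per expected-length class during generation.
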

Yet another simple observation allows to reduce the set of subsets $T$ at each step:
\begin{lemma}
Consider a node labeled $(T,\mathcal S)$. Assume that there is $i \in [n]$ such that, for all $S$ in $\mathcal S$, $i \notin S$. Then we can replace $T$ by $T\cup \{i\}$.
\end{lemma}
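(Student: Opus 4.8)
The plan is to prove that, under the hypothesis, replacing the node label $T$ by $T\cup\{i\}$ leaves the node's two children unchanged, so that the whole subtree rooted at the node, all of its leaf lengths $\ell_{\mathcal T}(\cdot)$, and hence its expected length $L_{\mathcal T}$ are unaffected. Everything therefore reduces to the single split identity
\[
\mathcal S_{T\cup\{i\}}^{\bot}=\mathcal S_{T}^{\bot}
\qquad\text{and}\qquad
\mathcal S_{T\cup\{i\}}^{\top}=\mathcal S_{T}^{\top}.
\]
Once this is established at the node, an induction on the remaining tree depth — using the commutation of successive splits $(\mathcal S_T^{b})_{T'}^{b'}=(\mathcal S_{T'}^{b'})_T^{b}$ already recorded in the proof of Proposition~\ref{prop:len} — shows that every descendant selection set and every leaf is the same for the two procedures, whence $\ell_{\mathcal T}(\cdot)$ and $L_{\mathcal T}$ coincide.

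To obtain the split identity I would argue that coordinate $i$ carries no discriminating information at this node. The hypothesis states that every surviving outcome $S\in\mathcal S$ has $i\notin S$, i.e. the $i$-th coordinate is constant (equal to $0$) throughout $\mathcal S$. A test partitions the outcomes of $\mathcal S$ according to the value it returns on each of them, that value being the aggregation of the tested coordinates; by the duality recorded in the remark following the definition of and-tests, the physically relevant pool tests aggregate by $\vee$, for which the value $0$ borne by coordinate $i$ on all of $\mathcal S$ is neutral. Consequently the value returned on any $S\in\mathcal S$ is the same whether one tests $T$ or $T\cup\{i\}$, so the two induced bipartitions of $\mathcal S$ are identical: the positive parts agree, and being complements in $\mathcal S$ the negative parts agree as well.

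The step I expect to be delicate is exactly this use of the fixed coordinate: one must invoke that the decided value of $i$ is \emph{neutral} for the relevant aggregation rather than \emph{absorbing}, and check that neither child silently collapses to the empty set, which Definition~\ref{def5} forbids. This is also where the asymmetry between a coordinate decided negatively and one decided positively must be handled with care, since only the neutral value leaves the split genuinely intact. Once the neutrality of the common coordinate value is pinned down the equality of the two splits — and therefore of the two expected lengths — is immediate, and the lemma yields precisely the promised reduction: during generation one may canonically adjoin every such decided index $i$ to $T$, shrinking the set of candidate selections that need to be explored at the node.
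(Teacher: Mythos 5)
Your reduction of the lemma to the single split identity $\mathcal S_{T\cup\{i\}}^{\top}=\mathcal S_{T}^{\top}$ and $\mathcal S_{T\cup\{i\}}^{\bot}=\mathcal S_{T}^{\bot}$ is exactly the right move --- the paper's entire proof consists of asserting this identity --- but the way you establish it fails inside the paper's formalism. The paper works with \emph{and}-tests: by the definition of splitting, $\mathcal S_T^{\top}=\{S\in\mathcal S \mid S\wedge T=T\}$, i.e.\ the outcomes $S$ (sets of \emph{negative} samples, bit $\texttt{1}$) with $T\subseteq S$. You instead invoke the remark on or-tests to claim that the pool aggregates by $\vee$, for which the constant value $0$ carried by coordinate $i$ is neutral. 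That substitution is not licensed: the remark only observes that or-tests are a dual setting, while every definition you must actually use (splitting, Definition~\ref{def5}, the lengths) is stated for $\wedge$, and for $\wedge$ the value $0$ is \emph{absorbing}, not neutral. Under the hypothesis as printed ($i\notin S$ for all $S\in\mathcal S$, so bit $i\equiv 0$ on $\mathcal S$) one gets $\mathcal S_{T\cup\{i\}}^{\top}=\{S \mid T\cup\{i\}\subseteq S\}=\emptyset$ and $\mathcal S_{T\cup\{i\}}^{\bot}=\mathcal S$: the split is destroyed and the positive child is empty, which Definition~\ref{def5} forbids. Concretely, for $n=2$, $\mathcal S=\{\texttt{10},\texttt{00}\}$, $T=\{1\}$, $i=2$: $\mathcal S_T^\top=\{\texttt{10}\}$ but $\mathcal S_{T\cup\{i\}}^\top=\emptyset$. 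You explicitly named this exact danger (neutral versus absorbing; no child may collapse to $\emptyset$) but then resolved it by fiat rather than performing the check --- and the check fails.

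The root cause is a slip in the printed statement itself: with the paper's conventions the hypothesis must be $i\in S$ for all $S\in\mathcal S$ (sample $i$ already decided healthy, bit $\texttt{1}$ --- a \emph{positive} decided point in the terminology of Section~\ref{app:select}); intuitively, only a sample known to be negative can be adjoined to a pool for free, whereas adjoining a known-positive sample makes the pooled test uninformative. Under the corrected hypothesis the paper's one-line proof is immediate: whenever $i\in S$, we have $S\wedge(T\cup\{i\})=T\cup\{i\}$ if and only if $S\wedge T=T$, so both parts of the split coincide, and the rest of your argument (commuting splits as in the proof of Lemma~\ref{prop:len}, induction down the subtree, equality of all $\ell_{\mathcal T}$ and hence of $L_{\mathcal T}$) goes through verbatim. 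So the fix is: flip the hypothesis, and prove the split identity directly from the definition of $\mathcal S_T^{\top}$ rather than through or-test duality. What your proposal actually proves, as written, is the or-test dual of the corrected lemma (with infectedness encoded by $1$), not the printed statement within the paper's and-test framework.
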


\begin{proof}
We can easily see that $S_T^{\top} = S_{T \cup \{i\}}^{\top}$ and $S_T^{\bot} = S_{T \cup \{i\}}^{\bot}$. 
\end{proof}
Finally we can leverage the fact that the solutions exhibit symmetries, which provides both a compact encoding of
testing procedures, and an appreciable reduction in problem size.
\begin{lemma}\label{lem:sym}

Let $\sigma \in \mathfrak{S}_n$ be a permutation on $n$ elements. If we apply $\sigma$ to each node and leaf of $\mathcal T$, which we can write $\sigma(\mathcal T)$, then
\begin{equation*}
L_{\sigma(\mathcal T)}(x_1, \dotsc, x_n) = L_{\mathcal T}\left( \sigma \left( x_1, \dotsc, x_n \right) \right).
\end{equation*}
\end{lemma}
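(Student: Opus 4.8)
The plan is to decompose the claim into three independent facts: that $\sigma(\mathcal T)$ is again a valid testing procedure with exactly the same tree \emph{shape} as $\mathcal T$; that the leaf lengths are transported by $\sigma$; and that the probability weights transform by a reciprocal substitution of variables. The identity then follows by reindexing the defining sum of $L_{\mathcal T}$ over $\Omega$ through the bijection $\sigma$.

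First I would check that relabeling commutes with the splitting operation. Since $\sigma$ merely permutes bit positions and $\wedge$ is performed coordinate-wise, we have $\sigma(S)\wedge\sigma(T)=\sigma(S\wedge T)$ for any selections $S,T$. Hence $\sigma(S)\wedge\sigma(T)=\sigma(T)$ holds exactly when $S\wedge T=T$, so $\sigma\bigl(\mathcal S_T^\top\bigr)=(\sigma\mathcal S)_{\sigma(T)}^\top$, and likewise for the negative part. Applying this at every node, the tree obtained by replacing each label $(\mathcal S,T)$ of $\mathcal T$ by $(\sigma\mathcal S,\sigma(T))$ still satisfies all constraints of Definition~\ref{def5}, and it is isomorphic as a tree to $\mathcal T$: only the labels change, not the shape. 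In particular the leaf carrying label $S$ in $\mathcal T$ now carries label $\sigma(S)$ at the same depth, which gives $\ell_{\sigma(\mathcal T)}(\sigma(S))=\ell_{\mathcal T}(S)$, i.e. $\ell_{\sigma(\mathcal T)}(\omega)=\ell_{\mathcal T}(\sigma^{-1}(\omega))$.

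Next I would track the probability of a leaf. By the independence lemma, $\Pr_x(\omega)=\prod_{\omega_i=1}x_i\prod_{\omega_i=0}(1-x_i)$. Writing $\omega=\sigma(S)$ and reindexing the product via $i\mapsto\sigma(i)$ shows that $\Pr_x(\sigma(S))$ is the same product with each $x_i$ replaced by $x_{\sigma(i)}$; that is, $\Pr_x(\sigma(S))=\Pr_{\sigma(x)}(S)$ under the convention $\sigma(x)_i=x_{\sigma(i)}$. Combining the two transports and reindexing the sum over $\Omega$ by $S\mapsto\sigma(S)$ then yields
\begin{equation*}
L_{\sigma(\mathcal T)}(x)=\sum_{\omega}\ell_{\sigma(\mathcal T)}(\omega)\,\Pr_x(\omega)=\sum_{S}\ell_{\mathcal T}(S)\,\Pr_{\sigma(x)}(S)=L_{\mathcal T}(\sigma(x)),
\end{equation*}
as required.

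The only delicate point is the bookkeeping of conventions: one must fix consistently how $\sigma$ acts on strings versus on the probability vector, so that the inverse appearing in the length transport cancels against the substitution in the probability weights. Once the action is pinned down (and one checks that $\sigma$ indeed permutes $\Omega$ bijectively, making the reindexing legitimate), every step is a direct verification. I expect no genuine analytic difficulty here; the entire content is the $\sigma$-equivariance of the splitting operation established in the first step.
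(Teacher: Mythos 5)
Your proof is correct and takes essentially the same route as the paper's: both rest on the equivariance of the splitting operation, $\sigma\bigl(\mathcal S_T^{\top}\bigr)=(\sigma\mathcal S)_{\sigma(T)}^{\top}$ (and likewise for $\bot$), from which the leaf labeled $S$ in $\mathcal T$ sits at the same depth as the leaf labeled $\sigma(S)$ in $\sigma(\mathcal T)$. You simply spell out the probability transport $\Pr_x(\sigma(S))=\Pr_{\sigma(x)}(S)$ and the reindexing of the sum over $\Omega$, which the paper compresses into ``hence the result.''
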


\begin{proof}
Note that for any $S \in \Omega$ and $T \in \Omega\setminus \{\emptyset\}$ we have $\sigma\left(S_T^{\top}\right) = S_{\sigma(T)}^{\top}$ and
$\sigma\left(S_T^{\bot}\right) = S_{\sigma(T)}^{\bot}$, where $\sigma$ operates on each binary string. It follows that for any leaf $S$, $\ell_{\mathcal T}(S)$ becomes $\ell_{\mathcal T}(\sigma(S))$ under the action of $\sigma$, hence the result. 
\end{proof}

\begin{lemma}
Let $S$ be a simplex of the hypercube, $\mathcal T$ a procedure, $E = \{\sigma(\mathcal T) | \sigma \in \mathfrak S_n\}$, then there exists $\mathcal T_0$ in $E$, such that for all $x$ in $S$, $\mathcal T_1$ in $E$ we have 
\begin{equation*}
L_{\mathcal T_0}(x) \leq L_{\mathcal T_1}(x).
\end{equation*}
Moreover we have for all $\sigma $ in $\mathfrak S_n$, $x$ in $\sigma (S)$, $\mathcal T_1$ in $E$
\begin{equation*}
L_{\sigma(\mathcal T_0)}(x) \leq L_{\mathcal T_1}(x).
\end{equation*}

\end{lemma}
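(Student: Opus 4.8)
The plan is to exploit the symmetry relation of Lemma~\ref{lem:sym}, which turns the whole statement into a question about \emph{rearrangements}. Recall that a simplex of the hypercube is a fundamental domain for the action of $\mathfrak S_n$ permuting coordinates, i.e. (after renaming the axes) a chamber $S = \{x \in [0,1]^n : x_1 \le x_2 \le \cdots \le x_n\}$, with $[0,1]^n = \bigsqcup_{\sigma} \sigma(S)$ up to boundaries. By Lemma~\ref{lem:sym} we have $L_{\sigma(\mathcal T)}(x) = L_{\mathcal T}(\sigma(x))$, so as $\sigma$ ranges over $\mathfrak S_n$ the polynomials $L_{\mathcal T_1}$, $\mathcal T_1 \in E$, are exactly the reorderings $x \mapsto L_{\mathcal T}(\sigma(x))$. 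Fixing a sorted $x \in S$, minimising $L_{\mathcal T_1}(x)$ over $E$ is therefore the same as minimising $L_{\mathcal T}$ over all rearrangements of the tuple $x$. The first assertion thus reduces to showing that \emph{one and the same} rearrangement is optimal simultaneously for every sorted $x$.

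To produce that uniform rearrangement I would run an exchange (bubble-sort) argument on adjacent transpositions. Writing $L_{\mathcal T}$ as a multilinear polynomial and isolating the two coordinates $k, k+1$, a direct computation yields the clean identity
\[
L_{\mathcal T}(x) - L_{\mathcal T}(\tau_k x) = (x_k - x_{k+1})\,(B - C),
\]
where $\tau_k = (k\ k{+}1)$ and $B,C$ collect the coefficients of the monomials in which exactly one of the bits $k,k+1$ is set; explicitly $B - C = \sum_{u} \Delta_{k,k+1}(u)\,\Pr(u)$, the sum running over the assignments $u$ of the remaining bits with positive weights $\Pr(u)$, and where $\Delta_{ij}(u) = \ell_{\mathcal T}(u;1_i,0_j) - \ell_{\mathcal T}(u;0_i,1_j)$ compares the two strings whose bits $i,j$ are $(1,0)$ resp. $(0,1)$ and whose remaining bits are $u$. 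The factor $(x_k - x_{k+1})$ already carries the sign dictated by $S$; what remains is to control the sign of the \enquote{position-comparison} term $\Delta_{k,k+1}(u)$.

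The heart of the proof — and the step I expect to be the main obstacle — is the structural claim that for every testing procedure and every pair of positions $i,j$, the sign of $\Delta_{ij}(u)$ is constant in $u$. This does \emph{not} follow from the single-bit monotonicity of Proposition~\ref{prop:len} alone, since $(u;1_i,0_j)$ and $(u;0_i,1_j)$ differ in two bits and are incomparable in the Boolean lattice; it must instead be extracted from the recursive tree structure (one checks it in the $n=2,3$ cases worked out earlier, where it recovers the frontier $x_1 = x_2$ of Figure~\ref{fig:zone2}). Granting it, the relation $i \preceq j \iff \Delta_{ij} \le 0$ is a total preorder on positions, so $B - C$ has constant sign throughout $S$ for every adjacent pair. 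Consequently, sorting the positions of $\mathcal T$ to match the coordinate order of $S$ by successive adjacent swaps never increases $L_{\mathcal T}$ on $S$, and the sorted arrangement is the sought $\mathcal T_0$, uniformly optimal on $S$.

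Finally, the \enquote{moreover} clause is pure equivariance and needs no new work: the orbit $E$ is $\mathfrak S_n$-stable, so for $x \in \sigma(S)$ we have $\sigma^{-1}(x) \in S$, and applying the first part at $\sigma^{-1}(x)$ together with Lemma~\ref{lem:sym} transports the optimality of $\mathcal T_0$ on $S$ into the optimality of $\sigma(\mathcal T_0)$ on $\sigma(S)$ against every competitor in $E$.
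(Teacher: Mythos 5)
You have correctly identified where the difficulty lies, but the proof does not go through --- and, importantly, there is nothing in the paper to fall back on: the paper states this lemma \emph{without any proof} (it is followed immediately by a remark), so your attempt can only be judged on its own merits. The parts you worked out are fine: the reduction via Lemma~\ref{lem:sym} to rearrangements of a sorted point, the exchange identity $L_{\mathcal T}(x) - L_{\mathcal T}(\tau_k x) = (x_k - x_{k+1}) \sum_{u} \Pr(u)\,\Delta_{k,k+1}(u)$, and the equivariance argument for the \enquote{moreover} clause are all correct. The gap is exactly the step you granted: the claim that $\Delta_{ij}(u)$ has a sign independent of $u$ is false for general testing procedures. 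Take $n=3$ (bit $1$ meaning \enquote{negative}) and let $\mathcal T$ be the procedure: test $\{1\}$; in the branch where $1$ is negative, test $\{2,3\}$, then $\{2\}$, then $\{3\}$; in the branch where $1$ is positive, test $\{2,3\}$, then $\{3\}$, then $\{2\}$. Every split is nonempty, so this is a legal procedure, with lengths $\ell(111)=\ell(011)=2$, $\ell(110)=\ell(001)=3$, $\ell(101)=\ell(100)=\ell(010)=\ell(000)=4$. Then
\begin{equation*}
\Delta_{23}(1) = \ell(110)-\ell(101) = -1, \qquad \Delta_{23}(0) = \ell(010)-\ell(001) = +1,
\end{equation*}
so the sign flips with $u$, and your bubble-sort step breaks down.

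Worse, this is not a repairable gap, because the lemma itself is false for this $\mathcal T$: no element of its orbit dominates on a whole simplex. With $x_i = \Pr(\text{bit } i = 1)$ one computes $L_{\mathcal T}(x) = 4 - x_3 - x_1x_2 - x_2x_3 + x_1x_3$, so by Lemma~\ref{lem:sym} the orbit consists of the six polynomials $L_{\pi}(x) = 4 - x_{\pi(3)} - x_{\pi(1)}x_{\pi(2)} - x_{\pi(2)}x_{\pi(3)} + x_{\pi(1)}x_{\pi(3)}$ for $\pi \in \mathfrak S_3$, written as $\pi = (\pi(1),\pi(2),\pi(3))$. On the simplex $S = \{x_1 \geq x_2 \geq x_3\}$, comparing $\pi_5 = (3,1,2)$ and $\pi_6 = (3,2,1)$ gives
\begin{equation*}
L_{\pi_5}(x) - L_{\pi_6}(x) = (x_1 - x_2)(1 - 2x_3),
\end{equation*}
which is precisely your exchange identity with the mixed-sign weights $\Delta_{23}(1)=-1$, $\Delta_{23}(0)=+1$, and it changes sign inside $S$. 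Concretely, at $(0.99, 0.98, 0.01) \in S$ the six values are $3.0199$, $3.9705$, $3.0197$, $3.9605$, $2.0497$, $2.0399$, so $\pi_6$ is the \emph{unique} minimiser there, ruling out every other candidate for $\mathcal T_0$; but at $(0.9, 0.8, 0.7) \in S$ we get $L_{\pi_5} = 2.41 < 2.45 = L_{\pi_6}$, ruling out $\pi_6$ as well. Hence the assertion $\exists\, \mathcal T_0\ \forall x \in S\ \forall \mathcal T_1 \in E$ fails. Your exchange argument does prove the lemma for any procedure whose $\Delta_{ij}$ all happen to have constant sign (which covers $n=2$ and the paper's pooling procedures), but as stated --- for arbitrary $\mathcal T$ --- the lemma is wrong, and any correct write-up must either add such a hypothesis or restrict the class of procedures (e.g.\ to those that are optimal somewhere), rather than attempt a general proof.
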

\begin{remark}
The last two propositions allow us to solve the problem on a simplex of the hypercube (of volume $1/n!$) such as 
$\{p_1, \dotsc, p_n \mid 1 \geq p_1 \cdots \geq p_n \geq 0\}$.
\end{remark}
%

\section{Best testing procedure at a point}
\label{app:select}

We examine the following problem: Find the testing procedure $\mathcal{T}$ for a given $k \leq n$, $(p_{i_1}, \dotsc, p_{i_k}) \in [0,1]^n$, and a selection $P \subseteq 2^{[k]}$ that satisfies:
\begin{itemize}
\item{$\mathcal{S}_{\mathcal{T}} = P$,}
\item{$\mathcal{T}$ is optimal at point $(p_{i_1}, \dotsc, p_{i_k})$}
\end{itemize}
This can be computed using a dynamic programming technique, by examining the outcome of each possible test that is the root node of the testing procedure $\mathcal{T}$, which gives Algorithm~\ref{alg:best_at_point}.

The same dynamic programming algorithm can also be used to compute the number of testing procedures (including those leading to duplicate polynomials) that exist in a given dimension. It is actually even easier, since there is a huge number of symmetries that can be exploited to count.\footnote{Indeed, we can apply the algorithm to an even higher dimension than our solution to the given point problem.}

\begin{definition}[Decided point]
We say that $x$ is a \emph{decided point} for $\mathcal{S}$ a set of selections if either of the following is true:
\begin{itemize}
\item{$x \in S$ for all $S \in \mathcal{S}$}
\item{$x \not\in S$ for all $S \in \mathcal{S}$}
\end{itemize}
In the first case, we will say that $x$ is a \emph{positive decided point}, and a \emph{negative decided point} in the second case.

We denote by $\mathcal{D}_{\mathcal{S}}^+$ the set of positive decided points of $\mathcal{S}$, $\mathcal{D}_{\mathcal{S}}^-$ its set of negative decided points, and $\mathcal{D}_{\mathcal{S}} = \mathcal{D}_{\mathcal{S}}^+ \cup \mathcal{D}_{\mathcal{S}}^-$ its set of decided points.
\end{definition}

\newalg%
{FindOptimal\label{alg:best_at_point}}%
{$k \geq 0$, $(p_1, \dots, p_k) \in [0, 1]^k$, $\mathcal{S} \subset 2^{[k]}$.}%
{The optimal testing procedure $\mathcal{T}$ at point $(p_1, \dots, p_k)$ which satisfies $\mathcal{S}_{\mathcal{T}} = \mathcal{S}$.}%
{\begin{enumerate}
\item if $k == 0$ then return the naive algorithm
\item if $|\mathcal D_{\mathcal S}| > 0$
\item \quad $U \gets \{u_1, \dotsc, u_\ell\} = [k] \setminus \mathcal D_{\mathcal S}$ 
\item \quad $ \mathcal{R} \gets \{ \{ r_1, \dotsc, r_p \} \mid  \{ u_{r_1}, \dots, u_{r_p} \} \cup \mathcal{D}_{\mathcal{S}}^+ \} $
\item \quad $\mathcal{T} \gets \operatorname{FindOptimal}\left(\ell, (p_{u_1}, \dots, p_{u_\ell}),\mathcal{R}\right)$
\item \quad replace $\{t_1, \dotsc, t_r\}$ by $\{u_{t_1}, \dotsc, u_{t_r}\}$ in $\mathcal T$
\item \quad replace $\{\ell_1, \dotsc, \ell_r \}$ by $\{u_{\ell_1}, \dotsc, u_{\ell_r}\} \cup \mathcal{D}_{\mathcal S}^+$ in $\mathcal T$ 
\item else
\item \quad $W \gets \emptyset$
\item \quad for each $T \subseteq [k]$
\item \qquad   $\mathcal{S}_\bot \gets \mathcal{S}_T^\bot$ 
\item \qquad   $\mathcal{S}_\top \gets \mathcal{S}_T^\top$ 
\item \qquad if $S_\bot = \emptyset$ or $S_\top = \emptyset$ then continue
\item \qquad  $\mathcal{T}_\bot \gets \operatorname{FindOptimal}(k, (p_1, \dotsc, p_k), \mathcal{S}_\bot)$ 
\item \qquad $\mathcal{T}_\top \gets \operatorname{FindOptimal}(k, (p_1, \dotsc, p_k), \mathcal{S}_\top)$ \\
\item \qquad $W \gets W \cup \{ (\mathcal{T}, \mathcal{T}_\bot, \mathcal{T}_\top) \}$
\item \quad return the best algorithm in $W$ at point $(p_1, \dotsc, p_n)$
\end{enumerate}}
Counting the number of algorithms in a given dimension works the same way; the only difference is that there is no need to look at the probabilities, and thus, the resulting Algorithm~\ref{alg:count_algs} does fewer recursive calls and is faster.\footnote{We are not aware of a closed-form formula providing the same values
as this algorithm.}
\newalg%
{CountAlgorithms\label{alg:count_algs}}%
{$k \geq 0$, $\mathcal{S} \subset 2^{[k]}$.}%
{The number of testing procedures which satisfy $\mathcal{S}_{\mathcal{T}} = \mathcal{S}$.}%
{\begin{enumerate}
\item if $k == 0$ then return $1$
\item if $|\mathcal D_\mathcal{S}| > 0$
\item \quad $U \gets \{u_1, \dotsc, u_\ell\} = [k] \setminus \mathcal D_\mathcal{S}$
\item \quad $ \mathcal{R} = \{ \{ r_1, \dots, r_p \} \mid \{ u_{r_1}, \dots, u_{r_p} \} \cup \mathcal{D}_{\mathcal{S}}^+ \} $ 
\item \quad return $\operatorname{CountAlgorithms}(\ell, \mathcal R)$
\item $c \gets 0$
\item for each $T \subseteq [k]$
\item \quad $\mathcal{S}_\bot \gets \mathcal{S}_T^\bot$ 
\item \quad $\mathcal{S}_\top \gets \mathcal{S}_T^\top$ 
\item \quad if $S_\bot = \emptyset$ or $S_\top = \emptyset$ then continue
\item \quad $c_\bot \gets \operatorname{CountAlgorithms}(k, (p_1, \dotsc, p_k), \mathcal{S}_\bot)$ 
\item \quad $c_\top \gets \operatorname{CountAlgorithms}(k, (p_1, \dotsc, p_k), \mathcal{S}_\top)$ 
\item \quad $c \gets c + c_\top c_\bot$
\item return $c$
\end{enumerate}}


\iflongversion
\section{Enumerating procedures for $n=3$}

All the procedures for $n=3$ that are optimal at some point, up to symmetries, are represented in Figure~\ref{fig:n3complete}.

\begin{figure}
\centering 
\begin{tikzpicture}[scale=0.6]
\Tree[
.\node{(1)};
	\edge ; [.\node{(2)} ; 
		\edge ; [.\node{(3)} ;
        	\edge ; [.\node[blue]{\texttt{111}} ; ]
            \edge ; [.\node[blue]{\texttt{110}} ; ]
        ]
		\edge ; [.\node{(3)} ;
        	\edge ; [.\node[blue]{\texttt{101}} ; ]
            \edge ; [.\node[blue]{\texttt{100}} ; ]
		] 
	]
	\edge ; [.\node{(2)}; 
		\edge ; [.\node{(3)} ;
        	\edge ; [.\node[blue]{\texttt{011}} ; ]
            \edge ; [.\node[blue]{\texttt{010}} ; ]
        ]
		\edge ; [.\node{(3)} ;
        	\edge ; [.\node[blue]{\texttt{001}} ; ]
            \edge ; [.\node[blue]{\texttt{000}} ; ]
		]
	]
]
\end{tikzpicture}
\begin{tikzpicture}[scale=0.6]
\Tree[
.\node{(1,2)};
	\edge ; [.\node{(3)} ; 
        \edge ; [.\node[blue]{\texttt{111}} ; ]
        \edge ; [.\node[blue]{\texttt{110}} ; ] 
	]
	\edge ; [.\node{(1,3)}; 
		\edge ; [.\node[blue]{\texttt{101}} ; ]
		\edge ; [.\node{(1)} ;
        	\edge ; [.\node[blue]{\texttt{100}} ; ]
            \edge ; [.\node{(2)} ;
                \edge ; [.\node{(3)} ;
                    \edge ; [.\node[blue]{\texttt{011}} ; ]
                    \edge ; [.\node[blue]{\texttt{010}} ; ]
                ]
                \edge ; [.\node{(3)} ;
                    \edge ; [.\node[blue]{\texttt{001}} ; ]
                    \edge ; [.\node[blue]{\texttt{000}} ; ]
                ]
            ]
		]
	]
]
\end{tikzpicture}
\begin{tikzpicture}[scale=0.6]
\Tree[
.\node{(1,2)};
	\edge ; [.\node{(3)} ; 
        \edge ; [.\node[blue]{\texttt{111}} ; ]
        \edge ; [.\node[blue]{\texttt{110}} ; ] 
	]
	\edge ; [.\node{(1)}; 
		\edge ; [.\node{(3)} ;
            \edge ; [.\node[blue]{\texttt{101}} ; ]
            \edge ; [.\node[blue]{\texttt{100}} ; ]
        ]
		\edge ; [.\node{(2)} ;
        	\edge ; [.\node{(3)} ; 
                \edge ; [.\node[blue]{\texttt{011}} ; ]
                \edge ; [.\node[blue]{\texttt{010}} ; ]
            ]
            \edge ; [.\node{(3)} ;
                \edge ; [.\node[blue]{\texttt{001}} ; ]
                \edge ; [.\node[blue]{\texttt{000}} ; ]
            ]
		]
	]
]
\end{tikzpicture}
\begin{tikzpicture}[scale=0.6]
\Tree[
.\node{(1,2,3)};
	\edge ; [.\node[blue]{\texttt{111}} ; ]
	\edge ; [.\node{(1,2)}; 
		\edge ; [.\node[blue]{\texttt{110}} ; ]
		\edge ; [.\node{(1)} ;
        	\edge ; [.\node{(3)} ;
                \edge ; [.\node[blue]{\texttt{101}} ; ]
                \edge ; [.\node[blue]{\texttt{100}} ; ]
            ]
            \edge ; [.\node{(2,3)} ;
                \edge ; [.\node[blue]{\texttt{011}} ; ]
                \edge ; [.\node{(2)} ;
                    \edge ; [.\node[blue]{\texttt{010}} ; ]
                    \edge ; [.\node{(3)} ;
                        \edge ; [.\node[blue]{\texttt{001}} ; ]
                        \edge ; [.\node[blue]{\texttt{000}} ; ]
                    ]
                ]
            ]
		]
	]
]
\end{tikzpicture}
\begin{tikzpicture}[scale=0.6]
\Tree[
.\node{(1,2,3)};
	\edge ; [.\node[blue]{\texttt{111}} ; ]
	\edge ; [.\node{(1,2)}; 
		\edge ; [.\node[blue]{\texttt{110}} ; ]
		\edge ; [.\node{(1,3)} ;
        	\edge ; [.\node[blue]{\texttt{101}} ; ]
            \edge ; [.\node{(2,3)} ;
                \edge ; [.\node[blue]{\texttt{011}} ; ]
                \edge ; [.\node{(1)} ;
                    \edge ; [.\node[blue]{\texttt{100}} ; ]
                    \edge ; [.\node{(2)} ;
                        \edge ; [.\node[blue]{\texttt{010}} ; ]
                        \edge ; [.\node{(3)} ;
                            \edge ; [.\node[blue]{\texttt{001}} ; ]
                            \edge ; [.\node[blue]{\texttt{000}} ; ]
                        ]
                    ]
                ]
            ]
		]
	]
]
\end{tikzpicture}
\begin{tikzpicture}[scale=0.6]
\Tree[
.\node{(1,3)};
	\edge ; [.\node{(2)} ; 
        \edge ; [.\node[blue]{\texttt{111}} ; ]
        \edge ; [.\node[blue]{\texttt{101}} ; ] 
	]
	\edge ; [.\node{(2,3)}; 
		\edge ; [.\node[blue]{\texttt{011}} ; ]
		\edge ; [.\node{(1)} ;
        	\edge ; [.\node{(2)} ;
                \edge ; [.\node[blue]{\texttt{110}} ; ]
                \edge ; [.\node[blue]{\texttt{100}} ; ]
            ]
            \edge ; [.\node{(3)} ;
                \edge ; [.\node[blue]{\texttt{001}} ; ]
                \edge ; [.\node{(2)} ;
                    \edge ; [.\node[blue]{\texttt{100}} ; ]
                    \edge ; [.\node[blue]{\texttt{000}} ; ]
                ]
            ]
		]
	]
]
\end{tikzpicture}
\begin{tikzpicture}[scale=0.6]
\Tree[
.\node{(1,2)};
	\edge ; [.\node{(3)} ;
        \edge ; [.\node[blue]{\texttt{111}} ; ]
        \edge ; [.\node[blue]{\texttt{110}} ; ]
    ]
	\edge ; [.\node{(1,3)}; 
		\edge ; [.\node[blue]{\texttt{101}} ; ]
		\edge ; [.\node{(1)} ;
        	\edge ; [.\node[blue]{\texttt{100}} ; ]
            \edge ; [.\node{(2,3)} ;
                \edge ; [.\node[blue]{\texttt{011}} ; ]
                \edge ; [.\node{(2)} ;
                    \edge ; [.\node[blue]{\texttt{010}} ; ]
                    \edge ; [.\node{(3)} ;
                        \edge ; [.\node[blue]{\texttt{001}} ; ]
                        \edge ; [.\node[blue]{\texttt{000}} ; ]
                    ]
                ]
            ]
		]
	]
]
\end{tikzpicture}
\begin{tikzpicture}[scale=0.6]
\Tree[
.\node{(1,2,3)};
	\edge ; [.\node[blue]{\texttt{111}} ; ]
	\edge ; [.\node{(1,2)}; 
		\edge ; [.\node[blue]{\texttt{110}} ; ]
		\edge ; [.\node{(1,3)} ;
        	\edge ; [.\node[blue]{\texttt{101}} ; ]
            \edge ; [.\node{(1)} ;
                \edge ; [.\node[blue]{\texttt{100}} ; ]
                \edge ; [.\node{(2,3)} ;
                    \edge ; [.\node[blue]{\texttt{011}} ; ]
                    \edge ; [.\node{(2)} ;
                        \edge ; [.\node[blue]{\texttt{010}} ; ]
                        \edge ; [.\node{(3)} ;
                            \edge ; [.\node[blue]{\texttt{001}} ; ]
                            \edge ; [.\node[blue]{\texttt{000}} ; ]
                        ]
                    ]
                ]
            ]
		]
	]
]
\end{tikzpicture}
\begin{tikzpicture}[scale=0.6]
\Tree[
.\node{(1,2)};
	\edge ; [.\node{(3)} ;
        \edge ; [.\node[blue]{\texttt{111}} ; ]
        \edge ; [.\node[blue]{\texttt{110}} ; ]
    ]
	\edge ; [.\node{(1)}; 
		\edge ; [.\node{(3)} ;
            \edge ; [.\node[blue]{\texttt{101}} ; ]
            \edge ; [.\node[blue]{\texttt{100}} ; ]
        ]
		\edge ; [.\node{(2,3)} ;
        	\edge ; [.\node[blue]{\texttt{011}} ; ]
            \edge ; [.\node{(2)} ;
                \edge ; [.\node[blue]{\texttt{010}} ; ]
                \edge ; [.\node{(3)} ;
                    \edge ; [.\node[blue]{\texttt{001}} ; ]
                    \edge ; [.\node[blue]{\texttt{000}} ; ]
                ]
            ]
        ]
    ]
]
\end{tikzpicture}
\begin{tikzpicture}[scale=0.6]
\Tree[
.\node{(1,2,3)};
	\edge ; [.\node[blue]{\texttt{111}} ; ]
	\edge ; [.\node{(1)}; 
		\edge ; [.\node{(2)} ;
            \edge ; [.\node[blue]{\texttt{110}} ; ]
            \edge ; [.\node{(3)} ;
                \edge ; [.\node[blue]{\texttt{101}} ; ]
                \edge ; [.\node[blue]{\texttt{100}} ; ]
            ]
        ]
		\edge ; [.\node{(2,3)} ;
        	\edge ; [.\node[blue]{\texttt{011}} ; ]
            \edge ; [.\node{(2)} ;
                \edge ; [.\node[blue]{\texttt{010}} ; ]
                \edge ; [.\node{(3)} ;
                    \edge ; [.\node[blue]{\texttt{001}} ; ]
                    \edge ; [.\node[blue]{\texttt{000}} ; ]
                ]
            ]
        ]
    ]
]
\end{tikzpicture}
\caption{Optimal procedures (without permutations) for each zone when $n=3$.}\label{fig:n3complete}
\end{figure}
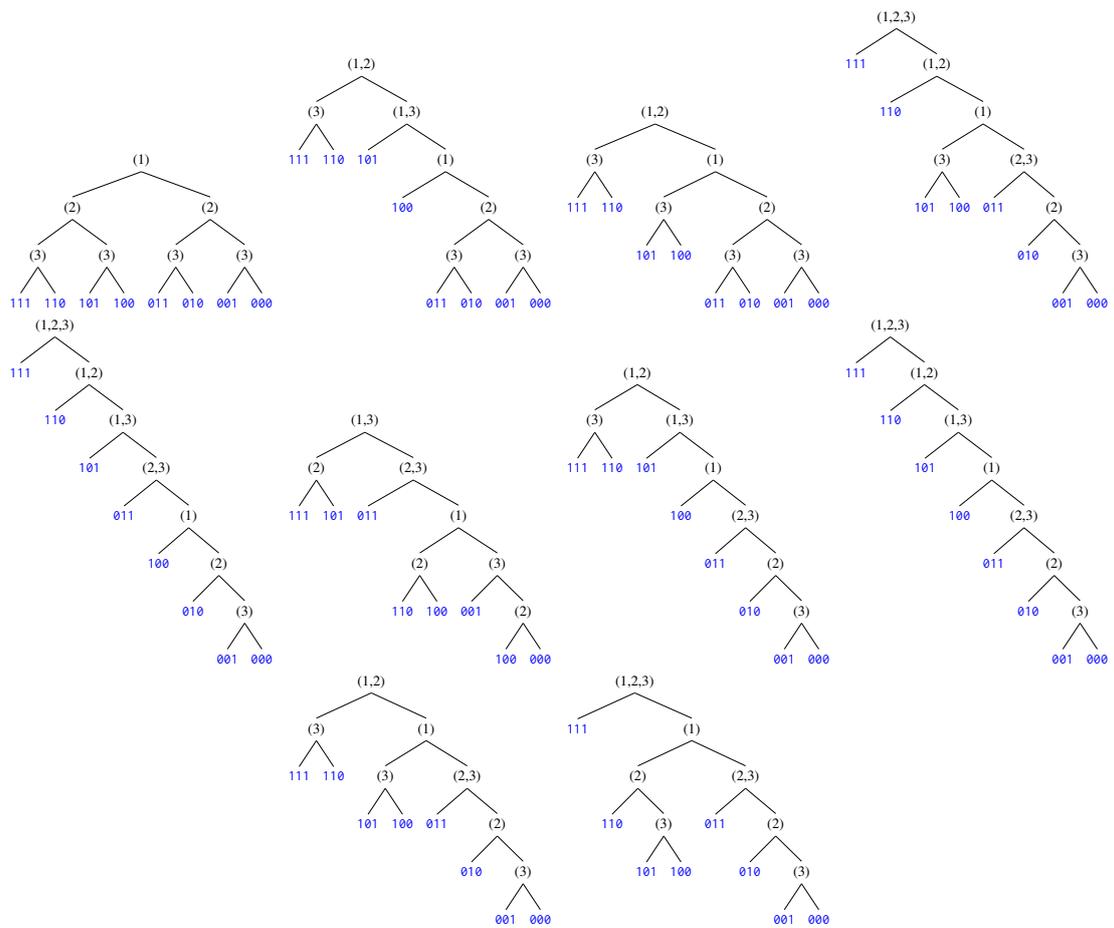

\else 

\fi 

\section{Conclusion and open questions}

\iflongversion
We have introduced the question of optimal pool testing with a priori probabilities, where one is given a set of samples
and must determine in the least average number of operations which samples are negative, and which are not. We formalized this
problem and pointed out several interesting combinatorial and algebraic properties that speed up the computation of
an optimal sequence of operations --- which we call a \emph{metaprocedure}. We determined the exact solution 
for up to $4$ samples.

For larger values, our approach requires too many computation to be tractable, and thus an exact solution is out of reach; however
we gave several heuristic algorithms that scale well. We showed that these heuristics are sub-optimal in all cases,
but they always do better than standard screening. The existence of a polynomial-time algorithm that finds optimal metaprocedures for large value of $n$ is an open question --- although there is probably more hope in finding better heuristics. An alternative would be to modify our generation algorithm to kill branches when the resulting expected lengths are all worse than some already-known procedure.

Once the metaprocedure for a given $n$ is known, which only needs to be computed once, implementation is straightforward and only invokes a handful of (automatically generated) cases.

Finally, in our model we do not consider false positives and false negatives. In other words, tests are assumed to be 100\% accurate. Integrating in the model false positive and false negative probabilities is an interesting research challenge.

Besides the performance gain resulting from implementing metaprocedures for sample testing, the very general framework allows for applications in medical and engineering tests.
\else 
We introduced and analysed the question of optimal pool testing with a priori probabilities. This problem yields several interesting combinatorial and algebraic properties, and we set out to identify the best testing procedures, which is completely solved for $n\geq4$ (see Appendix~\ref{app:sym} for $n=3$). Once the metaprocedure for a given $n$ is known, which only needs to be computed once, implementation is straightforward and only invokes a handful of (automatically generated) cases.

For larger values, an exact solution seems out of reach; however we give several heuristic algorithms in Appendix~\ref{app:approx}. The existence of a polynomial-time algorithm that finds optimal metaprocedures for large value of $n$ is an open question --- although there is probably more hope in finding better heuristics.

Finally, in our model we do not consider false positives and false negatives. In other words, tests are assumed to be 100\% accurate. Integrating in the model false positive and false negative probabilities is an interesting research challenge.

Besides the performance gain resulting from implementing metaprocedures for sample tests, the very general framework allows for applications in medical and engineering tests.
\fi

\bibliographystyle{alpha}
\bibliography{bib/biblio.bib}
\appendix 

\section{Approximation heuristics}\label{app:approx}

The approach consisting in generating many candidates, only to select a few, is wasteful. In fact, for large values of $n$ (even from $10$), generating all the candidates is beyond reach, despite the optimizations we described.

Instead, one would like to obtain the optimal testing procedure \emph{directly}. It is a somewhat simpler problem, and
we can find the solution by improving on our generation-then-selection algorithm (see Appendix~\ref{app:select}). However if
we wish to address larger values of $n$, we must relax the constraints and use the heuristic algorithms described below,
which achieve near-optimal results. This would be usefull in real life scenarii for \covid tests since we would like to test hundreds or more samples to have real gain.

\subsection{Information-Based Heuristic}
We first associate a \enquote{cost} to each outcome $S$, and set of outcomes $\mathcal S$:
\begin{align*}
\operatorname{cost}(S, \mathcal S) & = f(S, \mathcal S) + g(S, \mathcal S)\\
f(S, \mathcal S) & = \#\{i \in [n] \text{ s.t. } s[i] = 1 \text{ and } \exists S \in \mathcal S, S'[i] = 0 \} \\
g(S, \mathcal S) & = 
\begin{cases}
1 & \text{if $\exists i \in \{i \in [n] \text{ s.t. } S[i] =0\}, \exists S' \in \mathcal S, S'[i] = 1$}\\
0 & \text{otherwise}
\end{cases}
\end{align*}
This function approximates the smallest integer $n$ such that there exists $n$ calls to $\phi$ with arguments $T_1, \dotsc, T_n$, and $\beta_1, \dotsc, \beta_n$ in $\{\bot,\top\}$ with $\mathcal S _{T_1 \cdots T_n}^{\beta_1, \dotsc, \beta_n} = \{S\}$.
This function is used to define a \enquote{gain} function evaluating how much information is gathered when performing a test knowing the set of outcomes:
\begin{equation*}
\operatorname{gain}(T, \mathcal S) 
= \sum_{S \in \mathcal S_T^{\top}} \left( 1 - \frac{\operatorname{cost}(S, \mathcal S_T^{\top})}{\operatorname{cost}(S, \mathcal S)} \right) \Pr(S)
	+ \sum_{S \in \mathcal S_T^{\bot}} \left( 1 -  \frac{\operatorname{cost}(S, \mathcal S_T^{\bot})}{\operatorname{cost}(S,S)} \right)  \Pr(S)
\end{equation*}
Intuitively, we give higher gains to subsets $T$ on which testing gives more information. Note that, if a call to $\phi$ doesn't give any information (i.e. $S_T^{\top}$ or $S_T^{\bot}$ is empty), then $\operatorname{gain}(T, S) = 0$.

This heuristic provides us with a greedy algorithm that is straightforward to implement. For given values $x_1, \dotsc, x_n$ we thus obtain a testing procedure $\mathcal T_{H}$. 

\paragraph{Testing the heuristic.} We compared numerically $\mathcal T_H$ to the metaprocedure found by exhaustion in the case $n = 3$. The comparison consists in sampling points at random, and computing the sample mean of each algorithm's length on this input. The heuristic procedure gives a mean of $2.666$, which underperform the optimal procedure ($2.661$) by only 1\%.

\paragraph{Counter-example to optimality.}
In some cases, the heuristic procedure behaves very differently from the metaprocedure. For instance, for $n = 3$, $x_1 = 0.01$, $x_2 = 0.17$, $x_3 = 0.51$, the metaprocedure yields a tree which has an expected length of $1.889$. The heuristic however produces a tree which has expected length $1.96$. Both trees are represented in Figure~\ref{fig:comparison}.

\begin{figure}[!ht]
\centering 
\scalebox{0.75}{
\begin{tikzpicture}[level/.style={sibling distance=3cm/#1, level distance=0.44cm}]
\Tree[
.\node{(1,2,3)} ;
	\edge ; [.\node[blue]{\texttt{111}} ; ] 
	\edge ; [.\node{(1,2)} ;
       	\edge ; [.\node[blue]{\texttt{110}} ; ]
        \edge ; [.\node{(1,3)} ;
           	\edge ; [.\node[blue]{\texttt{101}} ; ]
            \edge ; [.\node{(1)} ;
            	\edge ; [.\node[blue]{\texttt{100}} ; ]
                \edge ; [.\node{(2,3)} ;
                	\edge ; [.\node[blue]{\texttt{011}} ; ]
                    \edge ; [.\node{(2)} ;
                    	\edge ; [.\node[blue]{\texttt{010}} ; ]
                        \edge ; [.\node{(3)} ;
                        	\edge ; [.\node[blue]{\texttt{001}} ; ]
                            \edge ; [.\node[blue]{\texttt{000}} ; ]
                        ]
                    ]
                ]
            ]
        ]
	]
]
\end{tikzpicture}
}\quad 
\scalebox{0.75}{
\begin{tikzpicture}[level/.style={sibling distance=4.5cm/#1, level distance=0.5cm}]
\Tree[
.\node{(1,2,3)} ;
	\edge ; [.\node[blue]{\texttt{111}} ; ] 
	\edge ; [.\node{(1,2)} ;
       	\edge ; [.\node[blue]{\texttt{110}} ; ]
        \edge ; [.\node{(1)} ;
           	\edge ; [.\node{(3)} ;
            	\edge ; [.\node[blue]{\texttt{101}} ; ]
                \edge ; [.\node[blue]{\texttt{100}} ; ]
            ]
            \edge ; [.\node{(2,3)} ;
            	\edge ; [.\node[blue]{\texttt{011}} ; ]
                \edge ; [.\node{(2)} ;
                	\edge ; [.\node[blue]{\texttt{010}} ; ]
                    \edge ; [.\node{(3)} ;
                    	\edge ; [.\node[blue]{\texttt{001}} ; ]
                        \edge ; [.\node[blue]{\texttt{000}} ; ]
                    ]
                ]
            ]
        ]
    ]
]
\end{tikzpicture}
}
\caption{The optimal metaprocedure tree (left), and heuristic metaprocedure (right) for the same point $x = (0.01, 0.17, 0.51)$. The optimal procedure has expected length $1.889$, as compared to $1.96$ for the heuristic procedure.}\label{fig:comparison}
\end{figure}

Beyond their different lengths, the main difference between the two procedures of Figure~\ref{fig:comparison} begin at the third node. At that node the set $S$ is the same, namely $\{\bl{010},\bl{011},\bl{100},\bl{101},\bl{110},\bl{111}\}$, but the two procedures settle for a different $T$:  The metaprocedure splits $S$, with $T = \{1,3\}$, into $S_T^\bot = \{\bl{010}\}$ and $S_T^\top =\{\bl{011},\bl{100},\bl{101},\bl{110},\bl{111}\}$; while the heuristic chooses $T=\{1\}$ instead, and gets $S_T^\bot =\{\bl{010},\bl{011}\}$ and $ S_T^\top =\{\bl{100},\bl{101},\bl{110},\bl{111}\}$.

To understand this difference, first notice that besides $\bl{010}$ and $\bl{011}$, all leaves are associated to a very low probability. The heuristic fails to capture that by choosing $T = \{1,3\}$ early, it could later rule out the leaf $\bl{010}$ in one step and $\bl{011}$ in two. There does not seem to be a simple greedy way to detect this early on.




\subsection{Pairing heuristic}

Another approach is to use small metaprocedures on subsets of the complete problem.
Concretely, given $n$ samples to test, place them at random into $k$-tuples (from some small value $k$, e.g. 5).
Then apply the $k$-metaprocedure on these tuples. While sub-optimal, this approach does not yield worst results
than the naive procedure.

In cases where it makes sense to assume that all the $x_i$ are equal, then we may even recursively use the metaprocedures, i.e. the metaprocedures to be run are themselves places into $k$-tuples, etc. Using lazy evaluation,
only the necessary tests are performed.

\section{Equivalences and symmetries for $n = 3$}\label{app:sym}

A procedure can undergo a transformation that leaves its expected length unchanged. Such transformations are called \emph{equivalences}. On the other hand, Lemma~\ref{lem:sym} shows that some transformations operate a permutation $\sigma$ on the variables $x_i$ --- such transformations are called \emph{symmetries}.

Equivalences and symmetries are responsible for a large part of the combinatorial explosion observed when generating all procedures. By focusing on procedures up to symmetry, we can thus describe the complete set in a more compact way and attempt a first classification.

\iflongversion
In the following representations (Figures~\ref{fig:n3g1}, \ref{fig:n3g2}, and \ref{fig:n3g3}), blue indicates a fixed part, and red indicate a part undergoing some permutation. Double-headed arrows indicate that swapping nodes is possible. The number of symmetries obtained by such an operation is indicated under the curly brace below.
\else 
In the following representations (Figure~\ref{fig:n3g1}, other examples in the full version), blue indicates a fixed part, and red indicate a part undergoing some permutation. Double-headed arrows indicate that swapping nodes is possible. The number of symmetries obtained by such an operation is indicated under the curly brace below.
\fi

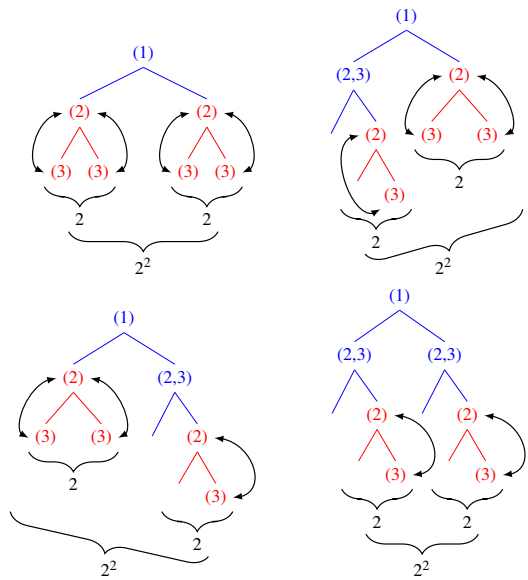
\begin{figure}[!ht]
\centering 
\scalebox{0.75}{
\begin{tikzpicture}[level 1/.style={sibling distance=10mm}]
\Tree[.\node[blue]{(1)} ;
      \edge[blue] ; [.\node(perm2)[red]{(2)} ; 
       \edge[red] ; [.\node(perm21)[red]{(3)} ; ] 
       \edge[red] ; [.\node(perm22)[red]{(3)} ; ] 
      ]
      \edge[blue] ; [.\node(perm1)[red]{(2)}; 
      \edge[red] ; \node(perm11)[red]{(3)};
      \edge[red] ; \node(perm12)[red]{(3)};
      ]
     ]
\draw[semithick,<->,>=latex] (perm11)..controls +(west:0.7) and +(west:0.8)..(perm1);
\draw[semithick,<->,>=latex] (perm12)..controls +(east:0.7) and +(east:0.8)..(perm1);
\draw[semithick,<->,>=latex] (perm21)..controls +(west:0.7) and +(west:0.8)..(perm2);
\draw[semithick,<->,>=latex] (perm22)..controls +(east:0.7) and +(east:0.8)..(perm2);
\draw[semithick,-,decorate,decoration={brace,amplitude=10pt,mirror}] (perm21.south west) -- node(perm31)[below=10pt] {$2$} (perm22.south east);
\draw[semithick,-,decorate,decoration={brace,amplitude=10pt,mirror}] (perm11.south west) -- node(perm32)[below=10pt] {$2$} (perm12.south east);
\draw[semithick,-,decorate,decoration={brace,amplitude=10pt,mirror}] (perm31.south west) -- node[below=10pt] {$2^2$} (perm32.south east);
\end{tikzpicture}}\qquad 
\scalebox{0.75}{
\begin{tikzpicture}
\Tree[.\node[blue]{(1)} ;
      \edge[blue] ; [.\node[blue]{(2,3)}; 
       \edge[blue] ; [.{} ]
       \edge[blue] ; [.\node(perm1)[red]{(2)}; 
	\edge[red] ; [.\node(perm111)[white]{(3)}; ]
	\edge[red] ; [.\node(perm11)[red]{(3)}; ]
       ]
      ]
      \edge[blue] ; [.\node(perm2)[red]{(2)}; 
       \edge[red] ; [.\node(perm21)[red]{(3)}; ]
       \edge[red] ; [.\node(perm22)[red]{(3)};
        \edge[white] ; [.\node[white]{(3)}; ]
        \edge[white] ; [.\node(perm32)[white]{(3)}; ]
       ]
      ]
     ]
\draw[semithick,<->,>=latex] (perm11)..controls +(south west:1) and +(west:1)..(perm1);
\draw[semithick,<->,>=latex] (perm21)..controls +(west:0.7) and +(west:0.8)..(perm2);
\draw[semithick,<->,>=latex] (perm22)..controls +(east:0.7) and +(east:0.8)..(perm2);
\draw[semithick,-,decorate,decoration={brace,amplitude=10pt,mirror}] (perm111.south west) -- node(perm31)[below=10pt] {$2$} (perm11.south east);
\draw[semithick,-,decorate,decoration={brace,amplitude=10pt,mirror}] (perm21.south west) -- node[below=10pt] {$2$} (perm22.south east);
\draw[semithick,-,decorate,decoration={brace,amplitude=10pt,mirror}] (perm31.south west) -- node[below=10pt] {$2^2$} (perm32.south east);
\end{tikzpicture}}\\
\scalebox{0.75}{
\begin{tikzpicture}[level 1/.style={sibling distance=5mm}]
\Tree[.\node[blue]{(1)} ;
      \edge[blue] ; [.\node(perm2)[red]{(2)} ; 
       \edge[red] ; [.\node(perm21)[red]{(3)} ;
        \edge[white] ; [.\node(perm32)[white]{(3)}; ]
        \edge[white] ; [.\node[white]{(3)}; ]
       ] 
       \edge[red] ; [.\node(perm22)[red]{(3)} ; ]
      ]
      \edge[blue] ; [.\node[blue]{(2,3)}; 
       \edge[blue] ; [.{} ]
       \edge[blue] ; [.\node(perm1)[red]{(2)}; 
	\edge[red] ; [.\node(perm12)[white]{(3)}; ]
	\edge[red] ; [.\node(perm11)[red]{(3)}; ]
       ]
      ]
     ]
\draw[semithick,<->,>=latex] (perm11)..controls +(east:1) and +(east:1)..(perm1);
\draw[semithick,<->,>=latex] (perm21)..controls +(west:0.7) and +(west:0.8)..(perm2);
\draw[semithick,<->,>=latex] (perm22)..controls +(east:0.7) and +(east:0.8)..(perm2);
\draw[semithick,-,decorate,decoration={brace,amplitude=10pt,mirror}] (perm21.south west) -- node(perm30)[below=10pt] {$2$} (perm22.south east);
\draw[semithick,-,decorate,decoration={brace,amplitude=10pt,mirror}] (perm12.south west) -- node(perm31)[below=10pt] {$2$} (perm11.south east);
\draw[semithick,-,decorate,decoration={brace,amplitude=10pt,mirror}] (perm32.south west) -- node[below=10pt] {$2^2$} (perm31.south east);
\end{tikzpicture}}\qquad 
\scalebox{0.75}{
\begin{tikzpicture}
\Tree[.\node[blue]{(1)} ;
      \edge[blue] ; [.\node[blue]{(2,3)}; 
       \edge[blue] ; [.{} ]
       \edge[blue] ; [.\node(perm11)[red]{(2)}; 
	\edge[red] ; [.\node(perm13)[white]{(3)}; ]
	\edge[red] ; [.\node(perm12)[red]{(3)}; ]
       ]
      ]
      \edge[blue] ; [.\node[blue]{(2,3)}; 
       \edge[blue] ; [.{} ]
       \edge[blue] ; [.\node(perm21)[red]{(2)}; 
	\edge[red] ; [.\node(perm23)[white]{(3)}; ]
	\edge[red] ; [.\node(perm22)[red]{(3)}; ]
       ]
      ]
     ]
\draw[semithick,<->,>=latex] (perm11)..controls +(east:1) and +(east:1)..(perm12);
\draw[semithick,<->,>=latex] (perm21)..controls +(east:1) and +(east:1)..(perm22);
\draw[semithick,-,decorate,decoration={brace,amplitude=10pt,mirror}] (perm13.south west) -- node(perm31)[below=10pt] {$2$} (perm12.south east);
\draw[semithick,-,decorate,decoration={brace,amplitude=10pt,mirror}] (perm23.south west) -- node(perm32)[below=10pt] {$2$} (perm22.south east);
\draw[semithick,-,decorate,decoration={brace,amplitude=10pt,mirror}] (perm31.south west) -- node[below=10pt] {$2^2$} (perm32.south east);
\end{tikzpicture}}
\caption{Trees representation with a grouping by one element on the root. For a
fixed element, we have $2^2$ possible permutations. Since we have $4$ patterns, we get $2^2 \times 4$ possible permutations for one grouping. Hence,
we finally have $2^2 \times 4 \times 3$ for all possible groupings by one element.}\label{fig:n3g1}
\end{figure}

\iflongversion
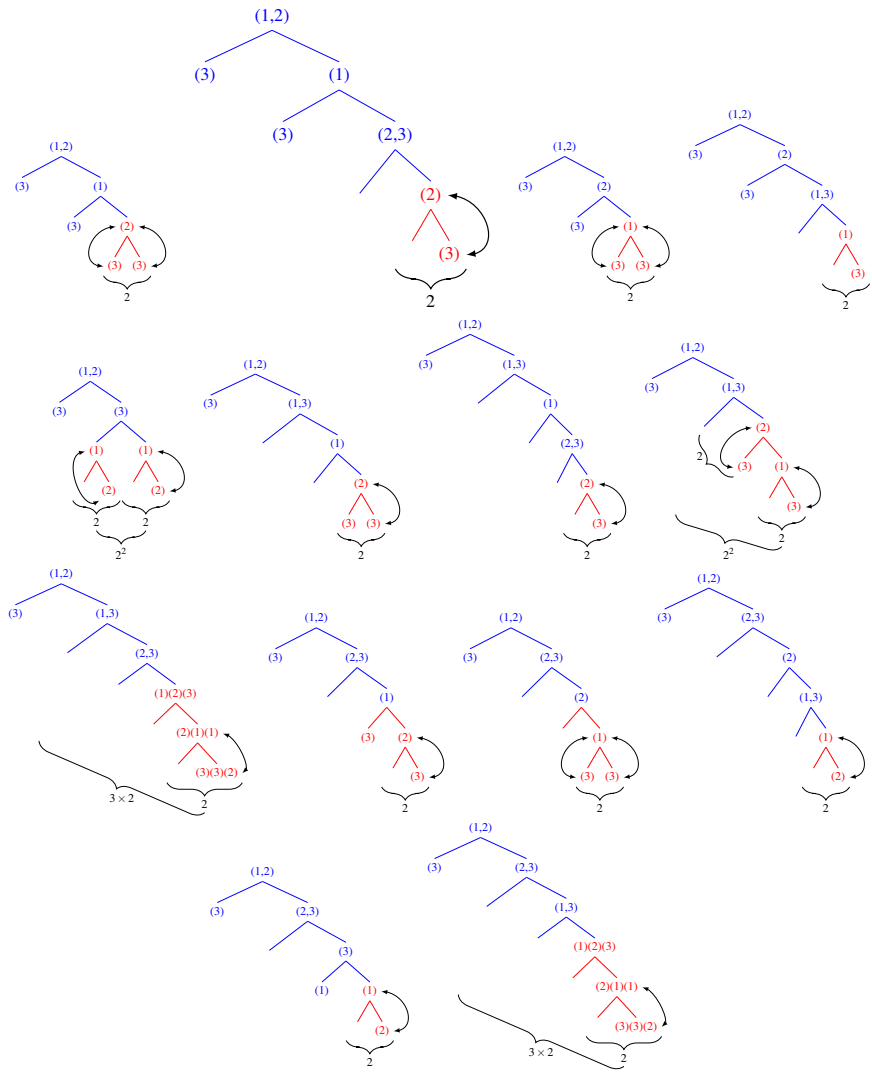
\begin{figure}
\centering 
\scalebox{0.5}{
\begin{tikzpicture}
\Tree[.\node[blue]{(1,2)} ; 
      \edge[blue] ; [.\node[blue]{(3)} ; ] 
      \edge[blue] ; [.\node[blue](perm51){(1)} ;
       \edge[blue]; [.\node[blue](perm2){(3)} ;
	       \edge[white]; [.\node[white]{45} ;
		       \edge[white]; [.\node(perm4)[white]{47} ; ]
		       \edge[white]; [.\node[white]{48} ; ]
	       ]
  	       \edge[white]; [.\node[white]{46} ; ]
       ] 
       \edge[blue]; [.\node[red](perm1){(2)} ; 
         \edge[red] ; [.\node(perm11)[red]{(3)} ; ] 
         \edge[red] ; [.\node(perm12)[red]{(3)} ; ]
        ]
       ]
      ]
     ]
\draw[semithick,<->,>=latex] (perm11)..controls +(west:1) and +(west:1)..(perm1);
\draw[semithick,<->,>=latex] (perm12)..controls +(east:1) and +(east:1)..(perm1);
\draw[semithick,-,decorate,decoration={brace,amplitude=10pt,mirror}] (perm11.south west) -- node(perm31)[below=10pt] {$2$} (perm12.south east);
\end{tikzpicture}}
\scalebox{0.75}{
\begin{tikzpicture}
\Tree[.\node[blue]{(1,2)} ; 
      \edge[blue] ; [.\node[blue]{(3)} ; ] 
      \edge[blue] ; [.\node[blue](perm52){(1)} ;
       \edge[blue]; [.\node[blue](perm71){(3)} ;
	       \edge[white]; [.\node[white]{45} ;
		       \edge[white]; [.\node(perm8)[white]{47} ; ]
		       \edge[white]; [.\node[white]{48} ; ]
	       ]
  	       \edge[white]; [.\node[white]{46} ; ]
       ] 
       \edge[blue];  [.\node[blue]{(2,3)} ;
        \edge[blue]; [.\node[blue]{} ;
	        \edge[white]; [.\node[white]{45} ; ]
	        \edge[white]; [.\node(perm72)[white]{46} ; ]
        ]
        \edge[blue]; [.\node[red](perm61){(2)} ; 
         \edge[red] ; [.\node[white](perm62){(3)} ; ] 
         \edge[red] ; [.\node[red](perm63){(3)} ; ]
        ]
       ]
      ]
     ]
\draw[semithick,<->,>=latex] (perm61)..controls +(east:1) and +(east:1)..(perm63);
\draw[semithick,-,decorate,decoration={brace,amplitude=10pt,mirror}] (perm62.south west) -- node(perm81)[below=10pt] {$2$} (perm63.south east);
\end{tikzpicture}
}\scalebox{0.5}{
\begin{tikzpicture}
\Tree[.\node[blue]{(1,2)} ; 
      \edge[blue] ; [.\node[blue]{(3)} ; ] 
      \edge[blue] ; [.\node[blue]{(2)} ;
       \edge[blue]; [.\node[blue](perm2){(3)} ;
	       \edge[white]; [.\node[white]{45} ;
		       \edge[white]; [.\node(perm4)[white]{47} ; ]
		       \edge[white]; [.\node[white]{48} ; ]
	       ]
  	       \edge[white]; [.\node[white]{46} ; ]
       ] 
        \edge[blue]; [.\node[red](perm1){(1)} ; 
         \edge[red] ; [.\node(perm11)[red]{(3)} ; ] 
         \edge[red] ; [.\node(perm12)[red]{(3)} ; ]
        ]
       ]
      ]
     ]
\draw[semithick,<->,>=latex] (perm11)..controls +(west:1) and +(west:1)..(perm1);
\draw[semithick,<->,>=latex] (perm12)..controls +(east:1) and +(east:1)..(perm1);
\draw[semithick,-,decorate,decoration={brace,amplitude=10pt,mirror}] (perm11.south west) -- node(perm31)[below=10pt] {$2$} (perm12.south east);
\end{tikzpicture}}\scalebox{0.5}{
\begin{tikzpicture}
\Tree[.\node[blue]{(1,2)} ; 
      \edge[blue] ; [.\node[blue]{(3)} ; ] 
      \edge[blue] ; [.\node[blue]{(2)} ;
       \edge[blue]; [.\node[blue](perm2){(3)} ;
	       \edge[white]; [.\node[white]{45} ;
		       \edge[white]; [.\node(perm4)[white]{47} ; ]
		       \edge[white]; [.\node[white]{48} ; ]
	       ]
  	       \edge[white]; [.\node[white]{46} ; ]
       ] 
       \edge[blue];  [.\node[blue]{(1,3)} ;
        \edge[blue]; [.\node[blue]{} ;
	        \edge[white]; [.\node(perm22)[white]{45} ; ]
	        \edge[white]; [.\node(perm21)[white]{46} ; ]
        ] 
         \edge[blue] ; [.\node[red](perm20){(1)} ;
          \edge[red] ; [.\node[white](perm41){(3)} ; ]
          \edge[red] ; [.\node[red](perm42){(3)} ; ]
         ]
       ]
      ]
     ]
\draw[semithick,-,decorate,decoration={brace,amplitude=10pt,mirror}] (perm41.south west) -- node(perm31)[below=10pt] {$2$} (perm42.south east);
\end{tikzpicture}}\\
\scalebox{0.5}{
\begin{tikzpicture}
\Tree[.\node[blue]{(1,2)} ; 
      \edge[blue] ; [.\node[blue]{(3)} ; ] 
      \edge[blue] ; [.\node[blue]{(3)} ;
       \edge[blue]; [.\node(perm11)[red]{(1)} ;
	       \edge[red]; [.\node(perm10)[white]{(2)} ; ]
           \edge[red]; [.\node(perm12)[red]{(2)} ; ]
       ] 
       \edge[blue];  [.\node(perm21)[red]{(1)} ;
        \edge[red]; [.\node(perm20)[white]{(2)} ; ]
        \edge[red]; [.\node(perm22)[red]{(2)} ; ]
       ]
      ]
     ]
\draw[semithick,<->,>=latex] (perm11)..controls +(west:1) and +(south west:1)..(perm12);
\draw[semithick,<->,>=latex] (perm21)..controls +(east:1) and +(east:1)..(perm22);
\draw[semithick,-,decorate,decoration={brace,amplitude=10pt,mirror}] (perm10.south west) -- node(perm90)[below=10pt] {$2$} (perm12.south east);
\draw[semithick,-,decorate,decoration={brace,amplitude=10pt,mirror}] (perm20.south west) -- node(perm91)[below=10pt] {$2$} (perm22.south east);
\draw[semithick,-,decorate,decoration={brace,amplitude=10pt,mirror}] (perm90.south) -- node[below=10pt] {$2^2$} (perm91.south);
\end{tikzpicture}}\scalebox{0.5}{
\begin{tikzpicture}
\Tree[.\node[blue]{(1,2)} ; 
      \edge[blue] ; [.\node(perm2)[blue]{(3)} ; ] 
      \edge[blue] ; [.\node[blue]{(1,3)} ;
       \edge[blue]; [.\node[blue]{} ;
	       \edge[white]; [.\node[white]{45} ;
		       \edge[white]; [.\node(perm4)[white]{47} ; ]
		       \edge[white]; [.\node[white]{48} ; ]
	       ]
  	       \edge[white]; [.\node[white]{46} ; ]
       ] 
       \edge[blue];  [.\node[blue]{(1)} ;
        \edge[blue]; [.\node(perm6)[blue]{} ;
	        \edge[white]; [.\node(perm22)[white]{45} ; ]
	        \edge[white]; [.\node(perm21)[white]{46} ; ]
        ]
            \edge[blue]; [.\node[red](perm30){(2)} ;
            	\edge[red]; [.\node[red](perm31){(3)} ; ]
            	\edge[red]; [.\node[red](perm32){(3)} ; ]
            ]
       ]
      ]
     ]
\draw[semithick,<->,>=latex] (perm30)..controls +(east:1) and +(east:1)..(perm32);
\draw[semithick,-,decorate,decoration={brace,amplitude=10pt,mirror}] (perm31.south west) -- node(perm9)[below=10pt] {$2$} (perm32.south east);
\end{tikzpicture}}\scalebox{0.5}{
\begin{tikzpicture}
\Tree[.\node[blue]{(1,2)} ; 
      \edge[blue] ; [.\node[blue]{(3)} ; ] 
      \edge[blue] ; [.\node[blue]{(1,3)} ;
       \edge[blue]; [.\node[blue]{} ;
	       \edge[white]; [.\node[white]{45} ;
		       \edge[white]; [.\node(perm4)[white]{47} ; ]
		       \edge[white]; [.\node[white]{48} ; ]
	       ]
  	       \edge[white]; [.\node[white]{46} ; ]
       ] 
       \edge[blue];  [.\node[blue]{(1)} ;
        \edge[blue]; [.\node[blue](perm2){} ;
	        \edge[white]; [.\node(perm22)[white]{45} ; ]
	        \edge[white]; [.\node(perm21)[white]{46} ; ]
        ]
        \edge[blue]; [.\node[blue](perm12){(2,3)} ;
        	\edge[blue]; [.\node(perm6)[red]{} ; ]
            \edge[blue]; [.\node[red](perm30){(2)} ;
            	\edge[red]; [.\node[white](perm31){(3)} ; ]
            	\edge[red]; [.\node[red](perm32){(3)} ; ]
            ]
        ]
       ]
      ]
     ]
\draw[semithick,<->,>=latex] (perm30)..controls +(east:1) and +(east:1)..(perm32);
\draw[semithick,-,decorate,decoration={brace,amplitude=10pt,mirror}] (perm31.south west) -- node(perm9)[below=10pt] {$2$} (perm32.south east);
\end{tikzpicture}}\scalebox{0.5}{
\begin{tikzpicture}
\Tree[.\node[blue]{(1,2)} ; 
      \edge[blue] ; [.\node[blue]{(3)} ; ] 
      \edge[blue] ; [.\node[blue]{(1,3)} ;
       \edge[blue]; [.\node[blue](perm2){} ;
	       \edge[white]; [.\node[white]{45} ;
		       \edge[white]; [.\node(perm4)[white]{47} ; ]
		       \edge[white]; [.\node[white]{48} ; ]
	       ]
  	       \edge[white]; [.\node[white]{46} ; ]
       ] 
        \edge[blue]; [.\node[red](perm12){(2)} ;
        	\edge[red] ; [.\node(perm6)[red]{(3)} ; ]
            \edge[red] ; [.\node[red](perm30){(1)} ;
            	\edge[red]; [.\node[white](perm31){(2)} ; ]
            	\edge[red]; [.\node[red](perm32){(3)} ; ]
            ]
        ]
       ]
      ]
     ]
\draw[semithick,<->,>=latex] (perm30)..controls +(east:1) and +(east:1)..(perm32);
\draw[semithick,<->,>=latex] (perm6)..controls +(west:1) and +(west:1)..(perm12);
\draw[semithick,-,decorate,decoration={brace,amplitude=10pt,mirror}] (perm2.south west) -- node(perm10)[left=7pt] {$2$} (perm6.south west);
\draw[semithick,-,decorate,decoration={brace,amplitude=10pt,mirror}] (perm31.south west) -- node(perm9)[below=10pt] {$2$} (perm32.south east);
\draw[semithick,-,decorate,decoration={brace,amplitude=10pt,mirror}] (perm4.south) -- node[below=10pt] {$2^2$} (perm9.south);
\end{tikzpicture}}\\
\scalebox{0.5}{
\begin{tikzpicture}
\Tree[.\node[blue]{(1,2)} ; 
      \edge[blue] ; [.\node[blue]{(3)} ; ] 
      \edge[blue] ; [.\node[blue]{(1,3)} ;
       \edge[blue]; [.\node[blue](perm2){} ;
	       \edge[white]; [.\node[white]{45} ;
		       \edge[white]; [.\node(perm4)[white]{47} ; ]
		       \edge[white]; [.\node[white]{48} ; ]
	       ]
  	       \edge[white]; [.\node[white]{46} ; ]
       ] 
       \edge[blue];  [.\node[blue]{(2,3)} ;
        \edge[blue]; [.\node[white]{(1)} ;
	        \edge[white]; [.\node(perm22)[white]{45} ; ]
	        \edge[white]; [.\node(perm21)[white]{46} ; ]
        ]
        \edge[blue]; [.\node[red](perm12){(1)(2)(3)} ;
        	\edge[red] ; [.\node(perm6)[white]{(1)} ; ]
            \edge[red] ; [.\node[red](perm30){(2)(1)(1)} ;
            	\edge[red]; [.\node[white](perm31){(3)} ; ]
            	\edge[red]; [.\node[red](perm32){(3)(3)(2)} ; ]
            ]
        ]
       ]
      ]
     ]
\draw[semithick,<->,>=latex] (perm30)..controls +(east:1) and +(east:1)..(perm32);
\draw[semithick,-,decorate,decoration={brace,amplitude=10pt,mirror}] (perm31.south west) -- node(perm9)[below=10pt] {$2$} (perm32.south east);
\draw[semithick,-,decorate,decoration={brace,amplitude=10pt,mirror}] (perm4.south) -- node[below=10pt] {$3 \times 2$} (perm9.south);
\end{tikzpicture}}
\scalebox{0.5}{
\begin{tikzpicture}
\Tree[.\node[blue]{(1,2)} ; 
      \edge[blue] ; [.\node[blue]{(3)} ; ] 
      \edge[blue] ; [.\node[blue]{(2,3)} ;
       \edge[blue]; [.\node[blue](perm2){} ;
	       \edge[white]; [.\node[white]{45} ;
		       \edge[white]; [.\node(perm4)[white]{47} ; ]
		       \edge[white]; [.\node[white]{48} ; ]
	       ]
  	       \edge[white]; [.\node[white]{46} ; ]
       ] 
        \edge[blue]; [.\node[blue](perm12){(1)} ;
        	\edge[red] ; [.\node(perm6)[red]{(3)} ; ]
            \edge[red] ; [.\node[red](perm30){(2)} ;
            	\edge[red]; [.\node[white](perm31){(2)} ; ]
            	\edge[red]; [.\node[red](perm32){(3)} ; ]
            ]
        ]
       ]
      ]
     ]
\draw[semithick,<->,>=latex] (perm30)..controls +(east:1) and +(east:1)..(perm32);
\draw[semithick,-,decorate,decoration={brace,amplitude=10pt,mirror}] (perm31.south west) -- node(perm9)[below=10pt] {$2$} (perm32.south east);
\end{tikzpicture}}\scalebox{0.5}{
\begin{tikzpicture}
\Tree[.\node[blue]{(1,2)} ; 
      \edge[blue] ; [.\node[blue]{(3)} ; ] 
      \edge[blue] ; [.\node[blue]{(2,3)} ;
       \edge[blue]; [.\node[blue](perm2){} ;
	       \edge[white]; [.\node[white]{45} ;
		       \edge[white]; [.\node(perm4)[white]{47} ; ]
		       \edge[white]; [.\node[white]{48} ; ]
	       ]
  	       \edge[white]; [.\node[white]{46} ; ]
       ] 
        \edge[blue]; [.\node[blue](perm12){(2)} ;
        	\edge[red] ; [.\node(perm6)[white]{(3)} ; ]
            \edge[red] ; [.\node[red](perm30){(1)} ;
            	\edge[red]; [.\node[red](perm31){(3)} ; ]
            	\edge[red]; [.\node[red](perm32){(3)} ; ]
            ]
        ]
       ]
      ]
     ]
\draw[semithick,<->,>=latex] (perm30)..controls +(west:1) and +(west:1)..(perm31);
\draw[semithick,<->,>=latex] (perm30)..controls +(east:1) and +(east:1)..(perm32);
\draw[semithick,-,decorate,decoration={brace,amplitude=10pt,mirror}] (perm31.south west) -- node(perm9)[below=10pt] {$2$} (perm32.south east);
\end{tikzpicture}}\scalebox{0.5}{
\begin{tikzpicture}
\Tree[.\node[blue]{(1,2)} ; 
      \edge[blue] ; [.\node[blue]{(3)} ; ] 
      \edge[blue] ; [.\node[blue]{(2,3)} ;
       \edge[blue]; [.\node[blue]{} ;
	       \edge[white]; [.\node[white]{45} ;
		       \edge[white]; [.\node(perm4)[white]{47} ; ]
		       \edge[white]; [.\node[white]{48} ; ]
	       ]
  	       \edge[white]; [.\node[white]{46} ; ]
       ] 
       \edge[blue];  [.\node[blue]{(2)} ;
        \edge[blue]; [.\node[blue](perm2){} ;
	        \edge[white]; [.\node(perm22)[white]{45} ; ]
	        \edge[white]; [.\node(perm21)[white]{46} ; ]
        ]
        \edge[blue]; [.\node[blue](perm12){(1,3)} ;
        	\edge[blue]; [.\node(perm6)[red]{} ; ]
            \edge[blue]; [.\node[red](perm30){(1)} ;
            	\edge[red]; [.\node[white](perm31){(2)} ; ]
            	\edge[red]; [.\node[red](perm32){(2)} ; ]
            ]
        ]
       ]
      ]
     ]
\draw[semithick,<->,>=latex] (perm30)..controls +(east:1) and +(east:1)..(perm32);
\draw[semithick,-,decorate,decoration={brace,amplitude=10pt,mirror}] (perm31.south west) -- node(perm9)[below=10pt] {$2$} (perm32.south east);
\end{tikzpicture}}\\
\scalebox{0.5}{
\begin{tikzpicture}
\Tree[.\node[blue]{(1,2)} ; 
      \edge[blue] ; [.\node(perm2)[blue]{(3)} ; ] 
      \edge[blue] ; [.\node[blue]{(2,3)} ;
       \edge[blue]; [.\node[blue]{} ;
	       \edge[white]; [.\node[white]{45} ;
		       \edge[white]; [.\node(perm4)[white]{47} ; ]
		       \edge[white]; [.\node[white]{48} ; ]
	       ]
  	       \edge[white]; [.\node[white]{46} ; ]
       ] 
       \edge[blue];  [.\node[blue]{(3)} ;
        \edge[blue]; [.\node(perm6)[blue]{(1)} ;
	        \edge[white]; [.\node(perm22)[white]{(1)} ; ]
	        \edge[white]; [.\node(perm21)[white]{46} ; ]
        ]
            \edge[blue]; [.\node[red](perm30){(1)} ;
            	\edge[red]; [.\node[white](perm31){(2)} ; ]
            	\edge[red]; [.\node[red](perm32){(2)} ; ]
            ]
       ]
      ]
     ]
\draw[semithick,<->,>=latex] (perm30)..controls +(east:1) and +(east:1)..(perm32);
\draw[semithick,-,decorate,decoration={brace,amplitude=10pt,mirror}] (perm31.south west) -- node(perm9)[below=10pt] {$2$} (perm32.south east);
\end{tikzpicture}}\scalebox{0.5}{
\begin{tikzpicture}
\Tree[.\node[blue]{(1,2)} ; 
      \edge[blue] ; [.\node[blue]{(3)} ; ] 
      \edge[blue] ; [.\node[blue]{(2,3)} ;
       \edge[blue]; [.\node[blue](perm2){} ;
	       \edge[white]; [.\node[white]{45} ;
		       \edge[white]; [.\node(perm4)[white]{47} ; ]
		       \edge[white]; [.\node[white]{48} ; ]
	       ]
  	       \edge[white]; [.\node[white]{46} ; ]
       ] 
       \edge[blue];  [.\node[blue]{(1,3)} ;
        \edge[blue]; [.\node[white]{(1)} ;
	        \edge[white]; [.\node(perm22)[white]{45} ; ]
	        \edge[white]; [.\node(perm21)[white]{46} ; ]
        ]
        \edge[blue]; [.\node[red](perm12){(1)(2)(3)} ;
        	\edge[red] ; [.\node(perm6)[white]{(1)} ; ]
            \edge[red] ; [.\node[red](perm30){(2)(1)(1)} ;
            	\edge[red]; [.\node[white](perm31){(3)} ; ]
            	\edge[red]; [.\node[red](perm32){(3)(3)(2)} ; ]
            ]
        ]
       ]
      ]
     ]
\draw[semithick,<->,>=latex] (perm30)..controls +(east:1) and +(east:1)..(perm32);
\draw[semithick,-,decorate,decoration={brace,amplitude=10pt,mirror}] (perm31.south west) -- node(perm9)[below=10pt] {$2$} (perm32.south east);
\draw[semithick,-,decorate,decoration={brace,amplitude=10pt,mirror}] (perm4.south) -- node[below=10pt] {$3 \times 2$} (perm9.south);
\end{tikzpicture}}
\caption{Tree representations with a grouping by two elements on the root. For 10 fixed elements, we have $2$ possible permutations, for 2 fixed elements, we have $2$ possible permutations, and for $2$ possible permutations, we have $6$ possible permutations. Hence,
we finally have $2 \times 10 + 4 \times 2 + 6 \times 2$ for all possible groupings by two elements.}\label{fig:n3g2}
\end{figure}

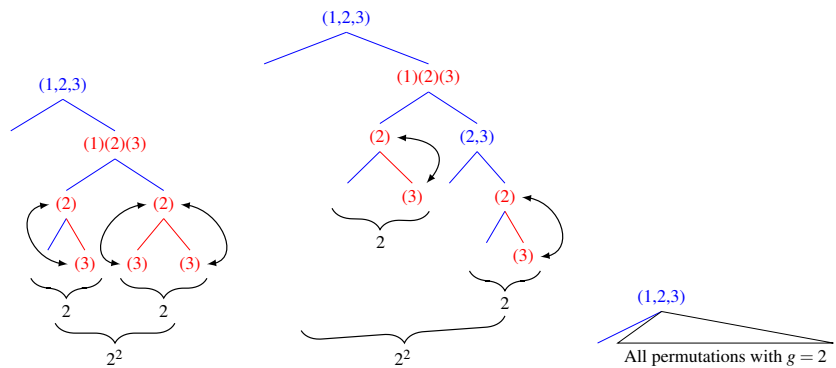
\begin{figure}[!ht]
\centering 
\scalebox{0.75}{%
\begin{tikzpicture}
\Tree[.\node[blue]{(1,2,3)} ; 
      \edge[blue] ; [.\node[white]{(3)} ; ] 
      \edge[blue] ; [.\node[red]{(1)(2)(3)} ;
       \edge[blue]; [.\node[red](perm10){(2)} ;
	       \edge[blue]; [.\node[white](perm12){(5)} ; ]
  	       \edge[red]; [.\node[red](perm11){(3)}  ; ]
       ] 
       \edge[blue];  [.\node[red](perm30){(2)} ;
        \edge[red]; [.\node[red](perm31){(3)} ;
	        \edge[white]; [.\node(perm22)[white]{45} ; ]
	        \edge[white]; [.\node(perm21)[white]{46} ; ]
        ]
        \edge[red]; [.\node[red](perm32){(3)} ; ]
       ]
      ]
     ]
\draw[semithick,<->,>=latex] (perm30)..controls +(east:1) and +(east:1)..(perm32);
\draw[semithick,<->,>=latex] (perm30)..controls +(west:1) and +(west:1)..(perm31);
\draw[semithick,<->,>=latex] (perm10)..controls +(west:1) and +(west:1)..(perm11);
\draw[semithick,-,decorate,decoration={brace,amplitude=10pt,mirror}] (perm31.south west) -- node(perm90)[below=10pt] {$2$} (perm32.south east);
\draw[semithick,-,decorate,decoration={brace,amplitude=10pt,mirror}] (perm12.south west) -- node(perm91)[below=10pt] {$2$} (perm11.south east);
\draw[semithick,-,decorate,decoration={brace,amplitude=10pt,mirror}] (perm91.south west) -- node[below=10pt] {$2^2$} (perm90.south east);
\end{tikzpicture}}\scalebox{0.75}{%
\begin{tikzpicture}
\Tree[.\node[blue]{(1,2,3)} ; 
      \edge[blue] ; [.\node(perm2)[white]{(3)} ; ] 
      \edge[blue] ; [.\node[red]{(1)(2)(3)} ;
       \edge[blue]; [.\node(perm10)[red]{(2)} ;
	       \edge[blue]; [.\node(perm110)[white]{45} ;
           	\edge[white]; [.\node[white]{(3)} ;
            	\edge[white]; [.\node(perm50)[white]{(3)}; ]
                \edge[white]; [.\node[white]{(3)}; ]
            ]
            \edge[white]; [.\node[white]{(3)} ; ]
           ]
  	       \edge[red]; [.\node(perm11)[red]{(3)} ; ]
       ] 
       \edge[blue];  [.\node[blue]{(2,3)} ;
        \edge[blue]; [.\node(perm6)[white]{(3)} ; ]
        \edge[blue]; [.\node[red](perm12){(2)} ; 
         \edge[blue]; [.\node[white](perm13){(3)} ; ]
         \edge[red]; [.\node[red](perm14){(3)} ; ]
        ]
       ]
      ]
     ]
\draw[semithick,<->,>=latex] (perm10)..controls +(east:1) and +(north east:1)..(perm11);
\draw[semithick,<->,>=latex] (perm12)..controls +(east:1) and +(east:1)..(perm14);
\draw[semithick,-,decorate,decoration={brace,amplitude=10pt,mirror}] (perm110.south west) -- node(perm90)[below=10pt] {$2$} (perm11.south east);
\draw[semithick,-,decorate,decoration={brace,amplitude=10pt,mirror}] (perm13.south west) -- node(perm91)[below=10pt] {$2$} (perm14.south east);
\draw[semithick,-,decorate,decoration={brace,amplitude=10pt,mirror}] (perm50.south) -- node[below=10pt] {$2^2$} (perm91.south);
\end{tikzpicture}}\scalebox{0.75}{%
\begin{tikzpicture}
\Tree[.\node[blue]{(1,2,3)} ; 
      \edge[blue] ; [.\node(perm2)[white]{(3)} ; ] 
      \edge[roof] ; [.\node[black]{All permutations with $g=2$} ; ]
     ]
\end{tikzpicture}}
\caption{Trees representation with a grouping by three elements on the root. For a fixed element at the upper left corner side, we
have $2^2$ possible permutations. For the upper right corner side, we get $2^2$. We replace the subroot of the fixed trees and get $(2^2 + 2^2) \times 3$. We also have the $40 \times 3$ trees from the grouping of two ($g=2$). Hence, we have $40 \times 3 + (2^2 + 2^2) \times 3$}\label{fig:n3g3}
\end{figure}
\fi

\end{document}